\pgfplotsset{compat=newest}
\newtheorem{lemma}{Lemma}
\newtheorem{theorem}{Theorem}
\newtheorem{proposition}{Proposition}
\newtheorem{definition}{Definition}
\newtheorem{example}{Example}
\DeclareMathOperator{\PP}{\mathcal{P}}
\DeclareMathOperator{\RankSet}{\mathcal{R}}
\DeclareMathOperator{\rk}{\mathtt{rk}}
\DeclareMathOperator{\MG}{\mathcal{G}}
\DeclareMathOperator{\tp}{\mathtt{top}}
\DeclareMathOperator{\KendallTau}{\delta_{\mathtt{KT}}}
\newcommand{\TopSortSet}[1]{\mathcal{O}(#1)}
\newcommand{\PPr}[1]{\ensuremath{\mathcal{P}_{#1}}}
\newcommand{\SubsetSet}[1]{\ensuremath{\Delta^{#1}}}
\newcommand{\EKendallTau}[1]{\ensuremath{\delta_{\mathtt{KT}}^{#1}}}
\newcommand{\WKendallTau}[1]{\ensuremath{K_{#1}}}
\newcommand{\EMG}[1]{\ensuremath{\mathcal{G}^{#1}}}
\newcommand{\revised}[1]{\textcolor{black}{#1}}
\newcommand{\revtwo}[1]{\textcolor{black}{#1}}
\DeclareMathOperator{\disagree}{\revtwo{\texttt{disagree}}}
\DeclareMathOperator{\bel}{\revtwo{\texttt{Below}}}
\DeclareMathOperator{\Spearman}{\delta_{\mathtt{S}}}
\newcommand{\ESpearman}[1]{\ensuremath{\delta_{\mathtt{S}}^{#1}}}
\newcommand{\HG}[1]{\textcolor{black}{#1}}
\journal{Theoretical Computer Science}
\def\ps@pprintTitle{%
 \let\@oddhead\@empty
 \let\@evenhead\@empty
 \def\@oddfoot{}%
 \let\@evenfoot\@oddfoot}
\begin{document}

\begin{frontmatter}

\title{Beyond Pairwise Comparisons in Social Choice:\\ A Setwise Kemeny Aggregation Problem\tnoteref{longVersion}}
\tnotetext[longVersion]{This is a long version of a work presented at the 34th {AAAI} Conference on Artificial Intelligence ({AAAI} 2020) \cite{GilbertPS20}.}

\author[hg]{Hugo Gilbert}
\ead{hugo.gilbert@dauphine.psl.eu}
\author[tp]{Tom Portoleau}
\ead{tom.portoleau@laas.fr}
\author[os]{Olivier Spanjaard\corref{cor1}}
\ead{olivier.spanjaard@lip6.fr}
\cortext[cor1]{Corresponding author.}
\address[hg]{Universit\'e Paris-Dauphine, Universit\'e PSL, CNRS, LAMSADE, 75016 Paris, France}
\address[tp]{LAAS-CNRS, IRIT-CNRS, Universit\'e de Toulouse, Toulouse, France}
\address[os]{Sorbonne Universit\'e, CNRS, Laboratoire d'Informatique de Paris 6, LIP6, Paris, France}


\begin{abstract}
In this paper, we advocate the use of setwise contests for aggregating a set of input rankings into an output ranking. We propose a generalization of the Kemeny rule where one minimizes the number of $k$-wise disagreements instead of pairwise disagreements (one counts 1 disagreement each time the top choice in a subset of alternatives of cardinality at most $k$ differs between an input ranking and the output ranking). After an algorithmic study of this $k$-wise Kemeny aggregation problem, we introduce a $k$-wise counterpart of the majority graph. \HG{This graph} reveals useful to divide the aggregation problem into several sub-problems, \revised{which enables to speed up the exact computation of a consensus ranking}. \HG{By introducing a $k$-wise counterpart of the Spearman distance, we also provide a 2-approximation algorithm for the $k$-wise Kemeny aggregation problem.} We conclude with numerical tests. 
\end{abstract}

\begin{keyword}
Computational social choice \sep%
Generalized Kemeny rule \sep%
Weighted majority graph \sep%
Computational complexity
\end{keyword}

\end{frontmatter}


\section{Introduction\label{sec:Intro}}
Rank aggregation aims at producing a single ranking from a collection of rankings of a fixed set of alternatives. In social choice theory \revised{(see, for example, the book by \citeauthor{moulin1991axioms} \cite{moulin1991axioms})}, where the alternatives are candidates to an election and each ranking represents the preferences of a voter, aggregation rules are called \emph{Social Welfare Functions} (SWFs). Apart from social choice, rank aggregation has proved useful in many applications, including preference learning \citep{chengHullermeier2009,ClemenconKS18}, collaborative filtering \citep{wang2014vsrank}, genetic map creation \citep{jackson2008consensus}, similarity search in database systems
\citep{fagin2003efficient} and design of web search engines \citep{altman2008axiomatic,dwork2001rank}. In the following, we use interchangeably the terms ``input rankings'' and ``preferences'', ``output ranking'' and ``consensus ranking'', as well as ``alternatives'' and ``candidates''. 

The well-known Arrow's impossibility theorem states that there exists no aggregation rule satisfying a small set of desirable properties \citep{arrow1950difficulty}. In the absense of an ``ideal'' rule, various aggregation rules have been proposed and studied. Following Fishburn's classification \cite{fishburn1977condorcet}, we can distinguish between the SWFs for which the output ranking can be computed from the \emph{majority graph} alone, those for which the output ranking can be computed from the \emph{weighted majority graph} alone, and all other SWFs. The majority graph is obtained from the input rankings by defining one vertex per alternative $c$ and by adding an edge from $c$ to $c'$ if $c$ is preferred to $c'$ in a strict majority of input rankings. In the weighted majority graph, each edge is weighted by the majority margin.  
The many SWFs that rely on these graphs alone take therefore only pairwise comparisons into account to determine an output ranking. \revised{Note that Fishburn's classification actually applies to social \emph{choice} functions, which prescribe a subset of winning alternatives from a collection of rankings, but the extension to SWFs is straightforward. For a compendium of SWFs that rely solely on (weighted) majority graphs, we refer the reader to the book chapter by \citeauthor{zwicker2016introduction} \cite{zwicker2016introduction}. }

The importance of this class of SWFs can be explained by their connection with the \emph{Condorcet consistency} property, stating: if there is a Condorcet winner (i.e., an alternative with outgoing edges to every other ones in the majority graph), then it should be ranked first in the output ranking. 
Nevertheless, as shown by \citeauthor{baldiga2010choice} \cite{baldiga2010choice}, the lack of Condorcet consistency is not necessarily a bad thing, because this property may come into contradiction with the objective of maximizing voters' agreement with the output ranking. The following example illustrates this point.

\begin{example}[Baldiga and Green \cite{baldiga2010choice}] \label{ex:Baldiga}
Consider an election with 100 voters and 3 candidates $c_1,c_2,c_3$, where 49 voters have preferences $c_1 \succ c_2 \succ c_3$, 48 have preferences $c_3 \succ c_2 \succ c_1$ and 3 have preferences $c_2 \succ c_3 \succ c_1$. Candidate $c_2$ is the Condorcet winner, but is the top choice of only 3 voters. In contrast, candidate $c_1$ is in slight minority against $c_2$ and $c_3$, but $c_1$ is the top choice of 49 voters. This massive gain in agreement may justify to put $c_1$ instead of $c_2$ in first position of the output ranking.
\end{example}

Following Baldiga and Green \cite{baldiga2010choice},
we propose to handle this tension between the pairwise comparisons (leading to ranking $c_2$ first) and the plurality choice (leading to ranking $c_1$ first) by using SWFs that take into account not only pairwise comparisons but \emph{setwise} contests. More precisely, given input rankings on a set $C$ of candidates and $k\!\in\!\{2,\ldots,|C|\}$, the idea is to consider the plurality score of each candidate $c$ for each subset $S\!\subseteq\!C$ such that $2 \!\le\!|S|\!\le\!k$, where the plurality score of $c$ for $S$ is the number of voters for which $c$ is the top choice in $S$. The results of setwise contests for the preferences of Example~\ref{ex:Baldiga} are given in Table~\ref{tab:setwise_contests} for $k\!=\!3$. Note that the three top rows obviously encode the same information as the weighted majority graph while the bottom row makes it possible to detect the tension between the pairwise comparisons and the plurality choice. 

\begin{table}
    \centering
\caption{\label{tab:setwise_contests} Results of setwise contests in Example~\ref{ex:Baldiga}. \HG{
The cell at the intersection of the row corresponding to a set $S$ and the column corresponding to a candidate $c$ displays the number of voters who rank $c$ first in $S$.}}
    \begin{tabular}{cccc}
\hline
set & $c_1$ & $c_2$ & $c_3$  \\
\hline
$\{c_1,c_2\}$ & 49 & 51 & -- \\
$\{c_1,c_3\}$ & 49 & -- & 51 \\
$\{c_2,c_3\}$ & -- & 52 & 48 \\
$\{c_1,c_2,c_3\}$ & 49 & 3 & 48 \\
\hline
\end{tabular}
\end{table}

One can then define a new class of SWFs, those that rely on the results of setwise contests alone to determine an output ranking. The many works that have been carried out regarding voting rules based on the (weighted) majority graph can be revisited in this broader setting. This line of research has already been investigated by Lu and Boutilier \cite{lu2010unavailable} and Baldiga and Green \cite{baldiga2010choice}. However, note that both of these works consider a setting where candidates may become unavailable after voters express their preferences. 
We do not make this assumption. We indeed believe that this new class of SWFs makes sense in the standard setting where the set of candidates is known and deterministic, as it amounts to generate an output ranking by examining the choices that are made by the voters on subsets of candidates of various sizes (while usually only pairwise choices are considered).

A natural SWF in this class consists in determining an output ranking that minimizes the number of disagreements with the results of setwise contests for sets of cardinality at most $k$. This is a $k$-wise generalization of the Kemeny rule, obtained as a special case for $k\!=\!2$. We recall that the Kemeny rule consists in producing a ranking that minimizes the number of \emph{pairwise} disagreements \citep{kemeny1959mathematics}.

\begin{example}
\label{ex:kwise}
Let us come back to Example~\ref{ex:Baldiga} and assume that we use the 3-wise Kemeny rule.
Consider the output ranking $r = c_1\!\succ\!c_2\!\succ\!c_3$. For set $S\!=\!\{c_1,c_2\}$, the number of disagreements with the results of setwise contests is $51$ because $c_2$ is the top choice in $S$ for $51$ voters (see Table~\ref{ex:Baldiga}) while it is $c_1$ for $r$. Similarly, the number of disagreements induced by $\{c_1,c_3\}$, $\{c_2,c_3\}$ and $\{c_1,c_2,c_3\}$ are respectively $51$, $48$ and $3\!+\!48$. The total number of disagreements is thus $51\!+\!51\!+\!48\!+\!3\!+\!48\!=\!201$. This is actually the minimum number of disagreements that can be achieved for these input rankings, which makes $r$ the $k$-wise Kemeny ranking.
\end{example}

The purpose of this paper is to study the $k$-wise Kemeny aggregation problem. Section~\ref{sec:Preliminaries} formally defines the problem and reports on related work. 
Section~\ref{sec:CompAndExact} is devoted to some axiomatic considerations of the corresponding voting rule, and to an algorithmic study of the problem. 
\HG{More precisely, we show that the decision variant of the $k$-wise Kemeny aggregation problem is NP-complete for any constant $k \ge 2$ and we provide an efficient fixed-parameter algorithm for parameter $m$ which relies on dynamic programming.} 
We then investigate a $k$-wise variant of the majority graph in Section~\ref{sec:Graph}. We prove that determining this graph is easy for $k\!=\!3$ but becomes NP-hard for $k\!>\!3$, and we show how to use it in a preprocessing step to speed up the computation of the output ranking. 
In \HG{Section~\ref{sec:approx}, we propose a 2-approximation algorithm for the $k$-wise Kemeny aggregation problem, by introducing a $k$-wise variant of the Spearman distance.}   
Numerical tests are presented in Section~\ref{sec:Numerical} \HG{to assess both the efficiency of our exact methods and the accuracy of our approximation algorithm}. 

\section{Preliminaries\label{sec:Preliminaries}}
Adopting the terminology of social choice theory, we consider an election with a set \(V\) of \(n\) voters and a set \(C\) of \(m\) candidates. 
Each voter \(v\) has a complete and transitive preference order \(r_v\) over candidates (also called ranking). 
The collection of these rankings defines a preference profile \(\PP\). 

\subsection{Notations and Definitions}

Let us introduce some notations related to rankings. We denote by \(\RankSet(C)\) the set of $m!$ rankings over \(C\). Given a ranking \(r\) and two candidates \(c\) and \(c'\), we write \(c \succ_r c'\) if \(c\) is in a higher position than \(c'\) in \(r\). 
Given a ranking \(r\) and a candidate \(c\), \( \rk(c,r) \) denotes the rank of \(c\) in \(r\). For instance, \(\rk(c,r_v) = 1\) if \(c\) is the preferred candidate of voter \(v\) (the candidate ranked highest in $r_v$). Given a ranking \(r\) and a set \(S\subseteq C\), we define \(r_S\) as the restriction of \(r\) to \(S \) and \(\tp_r(S)\) as the top choice (i.e., preferred candidate) in \(S\) according to \(r\). Similarly, given a preference profile \(\PP\) and a set \(S\subseteq C\), we define \(\PPr{S}\) as the restriction of \(\PP\) to \(S \).  
Lastly, we denote by \(\mathtt{tail}_k(r)\) (resp. \(\mathtt{head}_k(r)\)) the subranking compounded of the \(k\) least (resp. most) preferred candidates in \(r\).

We are interested in SWFs which, given a preference profile \(\PP\), should return a consensus ranking which yields a suitable compromise between the preferences in \(\PP\). 
One of the most well-known SWFs is the \emph{Kemeny rule}, which selects a ranking \(r\) with minimal Kendall tau distance to \(\PP\). We recall:

\revised{
\begin{definition}
The Kendall tau distance between two rankings $r$ and $r'$ is defined by
\[
 \KendallTau(r,r') =  \sum_{(c,c')\in C^2} \disagree_{c,c'}(r,r')
\]
where \(\disagree_{c,c'}(r,r') = 1\) if $c \succ_r c'$ and $c' \succ_{r'} c$, and 0 otherwise.
\end{definition}
}
\revised{Stated differently, \(\KendallTau\) 
measures the distance between two rankings by the number of pairwise disagreements between them. The distance $\KendallTau(r,\PP)$ between a ranking \(r\) and a preference profile \(\PP\) is then obtained by summation:}
\revised{\begin{align*}
    \KendallTau(r,\PP) & = \sum_{r'\in \PP} \KendallTau(r,r').
\end{align*}
}

However, the Kendall tau distance only takes into account pairwise comparisons, which may entail counterintuitive results as illustrated by Example~\ref{ex:Baldiga}. To address this issue, the Kendall tau distance can be generalized to take into consideration disagreements on sets of \revised{cardinality} greater than two. 
Given a set \(S\subseteq C\) and $t \le m$, we denote by \(\SubsetSet{t}(S)\) the set of subsets of \(S\) of \revised{cardinality} lower than or equal to \(t\), i.e., \(\SubsetSet{t}(S)\!= \!\{ S'\subseteq S \text{ s.t. } |S'|\le t \}\). When \(S\) is not specified, it is assumed to be \(C\), i.e., \(\SubsetSet{t}\!=\! \SubsetSet{t}(C)\).  
\revised{\begin{definition}
Let $k\!\ge\!2$ be an integer. The \(k\)-wise Kendall tau distance \(\EKendallTau{k}\) between \(r\) and \(r'\) is defined by:
\begin{align*}
    \EKendallTau{k}(r,r') & =  \sum_{S \in \SubsetSet{k}} \disagree_S(r,r')
\end{align*}
where $\disagree_S(r,r')\!=\!1$ if \(\tp_{r}(S)\neq \tp_{r'}(S)\), and 0 otherwise.
\end{definition}}

In other words, \(\EKendallTau{k}\) measures the distance between two rankings by the number of top-choice disagreements on sets of \revised{cardinality} lower than or equal to \(k\). \revised{It is not hard to see that \(\EKendallTau{k}(r,r')\) can also be computed by using the following formula:
\begin{align}
    \EKendallTau{k}(r,r') &= \sum_{\revtwo{(c,c') \in C^2}}
    \disagree_{c,c'}(r,r')  ~|\SubsetSet{k-2}(\bel_{c}(r)\cap \bel_{c'}(r')| \nonumber \\
    &= \sum_{\revtwo{(c,c') \in C^2}}
    \disagree_{c,c'}(r,r')
    \sum_{i=0}^{k-2}
    \binom{|\bel_{c}(r)\cap \bel_{c'}(r')|}{i} \label{eq:deltaKT}
\end{align}
where \(\bel_c(r) = \{ x \in C \text{ s.t. } c \succ_r x \} \) is the set of candidates that are ranked below \(c\) in \(r\). Formula~\ref{eq:deltaKT} amounts to counting, for any pair \(\{c,c'\}\) of candidates such that \(c \succ_r c'\) and \(c' \succ_{r'} c\), the number of sets in \(\SubsetSet{k}\) on which there is a disagreement because the top choice is \(c\) for \(r\) while it is \(c'\) for \(r'\). Such sets are of the form $S\cup\{c,c'\}$, where $S \in \SubsetSet{k-2}(\bel_c(r)\cap \bel_{c'}(r'))$, otherwise $c$ and $c'$ would not be the top choices. Hence the formula.}

\medskip

Several observations can be made regarding \(\EKendallTau{k}\). Firstly, the following result states that \(\EKendallTau{k}\) has all the properties of a distance: 

\begin{proposition} \label{prop:Distance}
The function \(\EKendallTau{k}\) has all the properties of a distance: non-negativity, identity of indiscernibles, symmetry and triangle inequality.
\end{proposition}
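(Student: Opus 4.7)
The plan is to verify the four defining properties of a distance directly from the definition
\[
\EKendallTau{k}(r,r') = \sum_{S\in\SubsetSet{k}} \mathbbm{1}_{t_r(S)\neq t_{r'}(S)},
\]
using only elementary properties of indicator functions and the fact that $k\ge 2$.

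First I would observe that non-negativity and symmetry are immediate: the summands are indicators (hence in $\{0,1\}$), and the predicate $t_r(S)\neq t_{r'}(S)$ is symmetric in $r,r'$. This takes one line each.

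For identity of indiscernibles, the forward direction (if $r=r'$ then $\EKendallTau{k}(r,r')=0$) is immediate. For the converse I would argue: since $k\ge 2$, the sum ranges over all pairs $\{c,c'\}\subseteq C$ in particular, so $\EKendallTau{k}(r,r')=0$ forces $t_r(\{c,c'\})=t_{r'}(\{c,c'\})$ for every pair, i.e.\ $r$ and $r'$ agree on every pairwise comparison, which uniquely determines a ranking; hence $r=r'$. This step is where the assumption $k\ge 2$ is actually used, and is the only place where one must invoke the structure of rankings beyond pure set-theoretic bookkeeping.

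For the triangle inequality, I would prove it termwise. Fix $S\in\SubsetSet{k}$ and three rankings $r_1,r_2,r_3$. If $t_{r_1}(S)=t_{r_2}(S)$ and $t_{r_2}(S)=t_{r_3}(S)$, then $t_{r_1}(S)=t_{r_3}(S)$; contrapositively,
\[
\mathbbm{1}_{t_{r_1}(S)\neq t_{r_3}(S)} \;\le\; \mathbbm{1}_{t_{r_1}(S)\neq t_{r_2}(S)} + \mathbbm{1}_{t_{r_2}(S)\neq t_{r_3}(S)}.
\]
Summing this pointwise inequality over all $S\in\SubsetSet{k}$ yields $\EKendallTau{k}(r_1,r_3)\le \EKendallTau{k}(r_1,r_2)+\EKendallTau{k}(r_2,r_3)$. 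No case analysis beyond this pigeonhole observation is required.

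There is no real obstacle here; the only conceptual point worth flagging is that the identity of indiscernibles relies on $k\ge 2$ (which is assumed throughout the paper), since for $k=1$ the sum would be empty and the map would be identically zero. I would write the proof as four short paragraphs, one per property, keeping the triangle inequality argument as the most detailed since it is the only one requiring a pointwise-to-global aggregation step.
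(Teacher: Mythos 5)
Your proposal is correct and follows essentially the same route as the paper: non-negativity and symmetry read off the definition, identity of indiscernibles via restriction to the pairwise sets (the paper phrases this as reducing to $\KendallTau$, which is the same observation), and the triangle inequality from the pointwise indicator bound $\mathbbm{1}_{t_{r_1}(S)\neq t_{r_3}(S)} \le \mathbbm{1}_{t_{r_1}(S)\neq t_{r_2}(S)} + \mathbbm{1}_{t_{r_2}(S)\neq t_{r_3}(S)}$ summed over $S\in\SubsetSet{k}$. Your explicit remark that $k\ge 2$ is what makes the pairwise restriction available is a fine touch but does not change the argument.
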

\begin{proof}
Non-negativity and symmetry are obvious from the definition of \(\EKendallTau{k}\). It also verifies identity of indiscernibles: if \(\EKendallTau{k}(r,r')\!=\!0\), then rankings \(r\) and \(r'\) must in particular agree on each pairwise comparison, hence \(\KendallTau(r,r')\!=\!0\) and \(r\!=\!r'\) because \(\KendallTau\)  verifies identity of indiscernibles. Lastly, triangular inequality comes from the fact that given three rankings \(r_1\), \(r_2\) and \(r_3\) and a set \(S\), \HG{\(\disagree_{S}(r_1,r_3) \le \disagree_{S}(r_1,r_2) + \disagree_{S}(r_2,r_3)\).} 
\end{proof}

Secondly, as mentioned in the introduction, we have $\EKendallTau{2}\!=\!\KendallTau$. Thirdly and maybe most importantly, \(\EKendallTau{k}(r,r')\) can be computed in polynomial time in the number $m$ of candidates:
\begin{proposition}
\label{prop:Complexity}
Given two rankings \(r\) and \(r'\), \(\EKendallTau{k}(r,r')\) can be computed in \(O(m^3)\) by using Formula~\ref{eq:deltaKT}.
\end{proposition}
\begin{proof}
We prove the \(O(m^3)\) complexity of the method. First note that the computation of all binomial coefficients \(\binom{p}{i}\) for \(i\!\in\!\{0,\ldots,k\!-\!2\}\) and \(p\!\in\!\{i,\ldots,m\!-\!2\}\) can be performed in \(O(mk)\) thanks to Pascal's formula \( \binom{p}{i}\!+\! \binom{p}{i+1}\!=\!\binom{p+1}{i+1}\). Then the computation of the sums \(\sum_{i=0}^{k-2} \binom{p}{i}\) for \(p \in \{0,\ldots,m\!-\!2\}\) can also be computed in \(O(mk)\). For each pair \(\{c,c'\}\) of candidates \HG{such that $\tp_{r}(\{c,c'\}) = c$ and $\tp_{r'}(\{c,c'\}) = c'$}, the computation of \(|\bel_c(r)\cap \bel_{c'}(r')|\) can be performed in \(O(m)\). As there are \HG{at most} \(O(m^2)\) such pairs, the overall complexity of the method is \(O(m^3 + mk) = O(m^3)\). 
\end{proof}

The distance \(\EKendallTau{k}\) induces a new SWF, the \(k\)-wise Kemeny rule, which, given a profile \(\PP\), returns a ranking \(r\) with minimal distance \(\EKendallTau{k}\) to \(\PP\), where:
\begin{align*}
    \EKendallTau{k}(r,\PP) & = \sum_{r'\in \PP} \EKendallTau{k}(r,r'). 
\end{align*}

Note that this coincides with the rule we used in the introduction, by commutativity of addition:
\begin{align*}
\sum_{r'\in \PP} \sum_{S \in \SubsetSet{k}} \disagree_{S}(r,r')=  \sum_{S \in \SubsetSet{k}} \sum_{r'\in \PP} \disagree_{S}(r,r'). 
\end{align*}

Determining a consensus ranking for this rule \HG{induces an optimization problem that we term} 
the \(k\)-wise Kemeny Aggregation Problem (\(k\)-KAP for short).

\medskip

\noindent\fbox{\parbox{0.98\columnwidth}{
\(k\)-\textbf{WISE KEMENY AGGREGATION PROBLEM ($k$-KAP)}\\
\emph{INSTANCE:} A profile \(\PP\) with $n$ voters and $m$ candidates.\\
\emph{SOLUTION:} \HG{A ranking $r$ of the $m$ candidates minimizing $\EKendallTau{k}(r,\PP)$.}
}}\\

\HG{When discussing the complexity of the problem, we may also refer to the decision version of the problem, \(k\)-KAP-DEC:}

\medskip

\noindent\HG{\fbox{\parbox{0.98\columnwidth}{
\(k\)-\textbf{KAP}-\textbf{DEC}\\
\emph{INSTANCE:} A profile \(\PP\) with $n$ voters and $m$ candidates and a threshold $\tau \in \mathbb{N}$.\\
\emph{QUESTION:} Does there exist a ranking \(r\) of the \(m\) candidates such that \(\EKendallTau{k}(r,\PP)\le \tau\)?
}}\\
}

\subsection{Related Work}
Several other variants of the Kemeny rule have been proposed in the literature, either to obtain  generalizations able to deal with partial or weak orders~\citep{dwork2001rank,zwicker2018cycles}, to penalize more some pairwise disagreements than others~\citep{kumar2010generalized}, or to account for candidates that may become unavailable after voters express their preferences~\citep{baldiga2010choice,lu2010unavailable}. 

Despite its popularity, the Kemeny rule has received several criticisms.  One of them is that the Kendall tau distance counts equally the disagreements on every pair of candidates. This property is undesirable in many settings. For instance, with a web search engine, a disagreement on a pair of web pages with high positions in the considered rankings should have a higher cost than a disagreement on pairs of web pages with lower ones. This drawback motivated the introduction of weighted Kendall tau distances by Kumar and Vassilvitskii \cite{kumar2010generalized}. 

\subsubsection{Comparison between the \(k\)-wise and the weighted Kendall tau distances}

As mentioned above, Kumar and Vassilvitskii proposed that disagreements on highly ranked candidates be more costly than disagreements on lowly ranked ones. To achieve this, they defined a position-weighted version of Kendall tau, denoted by \(\WKendallTau{w}\), where an inversion of the two candidates at positions \(i\) and \(i-1\) has a cost \(w_i\). For convenience, a cost \(w_1 = 1\) is also defined. Given the costs \(w_i\), one can then measure the average swap-cost of moving a candidate from position \(i\) to \(j\) by computing the ratio \(\frac{p_i - p_j}{i-j}\) where \(p_i = \sum_{j=1}^{i} w_j\). This observation motivated the following definition for a position-weighted version of Kendall tau \citep{kumar2010generalized}:
\revised{\begin{definition} Let $w\!=\!(w_1,\ldots,w_m)$ be a vector of $m$ weights $w_i\!>\!0$. The position-weighted Kendall tau distance \(\WKendallTau{w}\) between $r$ and $r'$ is defined by: 
\[
\WKendallTau{w}(r,r') = \sum_{(c,c') \in C^2} \disagree_{c,c'}(r,r') \,\overline{p}(r,r',c)\overline{p}(r,r',c')
\] 
where \(\overline{p}(r,r',c) = \displaystyle\frac{p_{\rk(c,r)} - p_{\rk(c,r')}}{\rk(c,r) - \rk(c,r')}\) and \(\overline{p}(r,r',c) = 1\) if \(\rk(c,r) = \rk(c,r')\).
\end{definition}}
Put another way, if two rankings \(r\) and \(r'\) disagree on a pairwise comparison between two candidates \(c\) and \(c'\), then the cost of this disagreement is weighted by the product of the average swap-cost of moving \(c\) from its position in \(r\) to its position in \(r'\) with the average swap-cost of moving \(c'\) from its position in \(r\) to its position in \(r'\). Note that if \(w_i=1\) for all \(i\) in \(\{1,\ldots,m\}\), one obtains the usual Kendall tau distance.

Both the position-weighted Kendall tau distance and the \(k\)-wise Kendall tau distance can be used in order to penalize more strongly disagreements on candidates with high ranks (i.e., candidates that appear near the top of the ranking). For the \(k\)-wise Kendall tau distance, this property results from the fact that for a pair \(\{c,c'\}\) such that $\tp_{r}(\{c,c'\})\!\neq\!\tp_{r'}(\{c,c'\})$ the number of resulting subsets $S$ for which $\tp_{r}(S)\!\neq\!\tp_{r'}(S)$ is all the larger as $c$ and $c'$ are ranked high in $r$ and $r'$. Note however that the position-weighted Kendall tau distance requires to specify the \(m-1\) parameters $w_2,\ldots,w_m$, the tuning of which does not seem to be obvious. In comparison, the \(k\)-wise Kendall tau distance only requires to choose the value of \(k\), from which the cost of each disagreement on a pair \(\{c,c'\}\) of candidates is naturally entailed: it corresponds to the number of subsets of $C$ of size less than or equal to $k$ for which the top choice in $r$ is $c$ while the top choice in $r'$ is $c'$ (see Section~\ref{sec:Preliminaries}, \HG{Equation~\ref{eq:deltaKT}} for the formal expression of swap-costs according to \(k\)). 

Let us illustrate with the following example that the \(k\)-wise Kendall tau distance is also well suited to penalize more the disagreements involving alternatives at the top of the input rankings.

\begin{example}
\label{ex:KT}
Consider rankings $r_1,r_2,r_3$ defined by $c_1\!\succ_{r_1}\!c_2\!\succ_{r_1}\!c_3$, $c_1\!\succ_{r_2}\!c_3\! \succ_{r_2}\!c_2$, and  $c_2\!\succ_{r_3}\!c_1\! \succ_{r_3}\!c_3$. 
We have $\KendallTau(r_1,r_2)\!=\!\KendallTau(r_1,r_3)\!=\!1$ while $\EKendallTau{3}(r_1,r_2)\!=\!1\!<\!2\!=\!\EKendallTau{3}(r_1,r_3)$ because $r_1$ and $r_3$ disagree on both subsets 
$\{c_1,c_2\}$ and $\{c_1,c_2,c_3\}$. Put another way, $\EKendallTau{3}(r_1,r_3) > \EKendallTau{3}(r_1,r_2)$ because $r_1$ and $r_3$ disagree on their top-ranked alternatives whereas $r_1$ and $r_2$ disagree on the alternatives ranked in the last places.
\end{example}

\subsection{Comparison with aggregation models where candidates may become unavailable}

The two works closest to ours are related to another extension of the Kemeny rule. This extension considers a setting in which, besides the fact that voters have preferences over a set \(C\), the election will in fact occur on a subset \(S\subseteq C\) drawn according to a probability distribution \citep{baldiga2010choice,lu2010unavailable}.  
The optimization problem considered is then to find a consensus ranking $r$ which minimizes, in expectation, the number of voters' disagreements with the chosen candidate in \(S\) (a voter $v$ disagrees if $\tp_{r_v}(S)\!\neq\! \tp_{r}(S)$). The differences between the work of Baldiga  and Green \cite{baldiga2010choice} and the one of Lu and Boutilier \cite{lu2010unavailable} \revised{are then} twofold. 
Firstly, while Baldiga and Green mostly focused on the  axiomatic properties of this aggregation procedure, the work of Lu and Boutilier has more of an algorithmic flavor.
Secondly, while Baldiga and Green mostly study a setting in which the probability \(\mathbb{P}(S)\) of \(S\) is only dependent on its cardinality (i.e., \(\mathbb{P}(S)\) is only a function of \(|S|\)), Lu and Boutilier study a setting that can be viewed as a special case of the former, where each candidate is absent of \(S\) independently of the others 
with a probability \(p\) (i.e., \(\mathbb{P}(S) = p^{|C\setminus S|}(1-p)^{|S|}\)). The Kemeny aggregation problem can be formulated in both settings, either by defining \(\mathbb{P}(S)\!=\!0\) for \(|S|\!\ge\!3\), or by defining a probability \(p\) that is ``sufficiently high'' w.r.t. the size of the instance \citep{lu2010unavailable}. Lu and Boutilier conjectured that the determination of a consensus ranking is NP-hard in their setting, designed an exact method based on mathematical programming, two approximation greedy algorithms and a \HG{polynomial-time approximation scheme}. 

Our model can be seen as a special case of the model of Baldiga and Green where the set \(S\) is drawn uniformly at random within the set of subsets of \(C\) of \revised{cardinality} smaller than or equal to a given constant \(k\!\ge\!2\). While it cannot be casted in the specific setting studied by Lu and Boutilier, our model is closely related and may be used to obtain new insights on their work.

\section{Aggregation with the \(k\)-wise Kemeny Rule\label{sec:CompAndExact}}
In this section, we investigate the axiomatic properties of the \(k\)-wise Kemeny rule, and then we turn to the algorithmic study of \(k\)-KAP.
\subsection{Axiomatic Properties of the \(k\)-wise Kemeny Rule\label{sec:AxProperties}}
Several properties of the \(k\)-wise Kemeny rule have already been studied by Baldiga and Green \cite{baldiga2010choice}, because their setting includes the \(k\)-wise Kemeny rule as a special case. Among other things, they showed that the rule is not \emph{Condorcet consistent}. That is to say, a Condorcet winner may not be ranked first in any consensus ranking even when one exists, as illustrated by Example~\ref{ex:kwise}. \HG{The example indeed shows that the \(k\)-wise Kemeny rule is not Condorcet consistent for $k\!=\!3$, as the Condorcet winner is not ranked first in the unique consensus ranking for the 3-wise Kemeny rule. Note that any $k > 3$ would yield the same consensus ranking, as there are only \(m\!=\!3\) candidates in the profile, hence the result holds for any \(k\!\ge\!3\). The example can be generalized to show that the result holds even if \(m\!\ge\!k\):}
\HG{\begin{proposition}
The \(k\)-wise Kemeny rule is not Condorcet consistent for any $k\ge 3$, even if $m \ge k$.
\end{proposition}
\begin{proof}
We provide an example similar to Example~\ref{ex:kwise}. Consider an election with $k$ candidates $c_1,c_2,\ldots, c_k$ and 100 voters, with:\\
-- 49 voters having preferences $r_1$ defined by $c_1 \succ c_2 \succ c_3 \succ c_4 \succ \ldots \succ c_k$;\\
-- 49 voters having preferences $r_2$ defined by $c_3 \succ c_2 \succ c_1 \succ c_4 \succ \ldots \succ c_k$;\\
-- 2 voters having preferences $r_3$ defined by $c_2 \succ c_1 \succ c_3 \succ c_4 \succ \ldots \succ c_k$.\\
Note that $c_2$ is a Condorcet winner and that the ranking $r_1$ has a $k$-wise Kemeny score worth $0 + 49 \times (2^{k-2} + 2^{k-3} + 2^{k-3}) + 2\times 2^{k-2}\!=\!100 \times 2^{k-2}$ while ranking $r_3$ has a score worth $49 \times 2^{k-2} + 49 \times (2^{k-2} + 2^{k-3}) + 0\!=\! 122.5 \times 2^{k-2}$. It is easy to convince oneself that $r_3$ is the best possible ranking if $c_2$ is ranked first. Indeed, candidates $c_1$, $c_2$ and $c_3$ are ranked higher than other candidates in all three rankings $r_1$, $r_2$ and $r_3$ and therefore should be placed at the top in a consensus ranking (see Lemma~\ref{lem:dominated set}). Additionally, $c_1$ and $c_3$ have symmetric positions in rankings $r_1$ and $r_2$, and $c_1$ is preferred to $c_3$ in $r_3$, thus $c_1$ should be placed before $c_3$ in a consensus ranking. The result follows.
\end{proof}}

The authors also show that the $k$-wise Kemeny rule is neutral, i.e., all candidates are treated equally, and that for $k \ge 3$ it is different from any positional method or any method that uses only the pairwise majority margins (among which is the standard Kemeny rule). 
We provide here some additional properties satisfied by the \(k\)-wise Kemeny rule: 
\begin{itemize}
   \item \emph{Monotonicity}: up-ranking cannot harm a winner; down-ranking cannot enable a loser to win. \HG{Let us state this axiom more formally. Let \(\RankSet^*_{\PP}\) denote the set of consensus rankings for preference profile \(\PP\). Then for any candidate $c\in C$ and profiles $\PP$ and $\PP'$ such that $\PP'$ can be obtained from $\PP$ by decreasing the position of $c$ in some ranking in $\PP$ (all other things being equal): $c \in \{\tp_r(C) : r \in \RankSet^*_{\PP'}\}$ implies $c \in \{\tp_r(C) : r \in \RankSet^*_{\PP}\}$ and $c \notin \{\tp_r(C) : r \in \RankSet^*_{\PP}\}$ implies $c \notin \{\tp_r(C) : r \in \RankSet^*_{\PP'}\}$.}
    \item \emph{Unanimity}: if all voters rank \(c\) before \(c'\), then \(c\) is ranked before \(c'\) in any consensus ranking.
    \item \emph{Reinforcement}: let \(\RankSet^*_{\PP}\) and \(\RankSet^*_{\PP'}\) denote the sets of consensus rankings for preference profiles \(\PP\) and \(\PP'\) respectively. If \( \RankSet^*_{\PP} \cap \RankSet^*_{\PP'} \neq \emptyset\) and \(\PP''\) is the profile obtained by concatenating \(\PP\) and \(\PP'\), then \(\RankSet^*_{\PP''} = \RankSet^*_{\PP} \cap \RankSet^*_{\PP'}\).
\end{itemize}
\revised{Examples of voting rules for which the monotonicity property does not hold are \emph{plurality with run-off} (a two-round election system where, if some candidate is top ranked by a majority of the voters, it wins in round 1; otherwise, round 2 consists of the majority rule applied to the two candidates with highest plurality score in round 1) and \emph{single transferable vote} (at
each stage, the candidate with lowest plurality score is dropped from all votes and each vote for which this candidate was top ranked is transferred to the next remaining candidate in the ranking;
at the first stage for which some candidate $c$ sits atop a majority of the votes, $c$ is
declared the winner). For more details, the reader may refer to the book chapter by Zwicker~\cite{zwicker2016introduction}. The unanimity property is obviously desirable for any reasonable voting rule. Finally, the reinforcement property has been introduced by Young \cite{young1974axiomatization}, originally calling it \emph{consistency}, for the axiomatization of Borda's rule viewed as a \emph{social choice function}, i.e., returning a (set of) winning candidate(s). Reinforcement states that a candidate elected by two disjoint electorates should remain a winning candidate if one merges the voters, and that a candidate elected by only one electorate is in some sense not as ``good'' as a candidate elected by both electorates. The adaptation of the property to \emph{social welfare functions}, i.e., functions returning a (set of) consensus ranking(s), has been proposed by \citeauthor{young1978consistent} \cite{young1978consistent} for an axiomatic characterization of Kemeny's rule: they show that it is the unique Condorcet consistent social welfare function that satisfy neutrality and reinforcement, where the neutrality property states that all candidates are treated equally (in the sense that permuting the names of the candidates in the input rankings result in the same permutation in the consensus rankings).} 

While the fact that the \(k\)-wise Kemeny rule satisfies \revised{neutrality and} reinforcement is quite obvious from its definition, the two following results state that the monotonicity and unanimity conditions also hold.

\begin{proposition} \label{prop:Monotonicity}
  The \(k\)-wise Kemeny rule satisfies monotonicity.
\end{proposition}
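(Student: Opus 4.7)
Let $\PP'$ denote the profile obtained from $\PP$ by having some voter $v$ up-rank candidate $c$, so only $r_v$ changes, into $r_v'$, by swapping $c$ with the candidate $c'$ that sits immediately above $c$ in $r_v$. Given an arbitrary consensus $r^*$ of $\PP$ with $\rk(c,r^*)=i$, the goal is to exhibit a consensus $\tilde r$ of $\PP'$ with $\rk(c,\tilde r)\le i$. My plan splits in three steps.

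First I would pin down the subsets affected by the swap. Because only the relative order of $c$ and $c'$ in voter $v$'s ballot has changed, a set $S\in\SubsetSet{k}$ satisfies $t_{r_v}(S)\neq t_{r_v'}(S)$ iff $\{c,c'\}\subseteq S$ and $S$ contains no candidate ranked above $c'$ in $r_v$; on every such $S$ the top flips from $c'$ to $c$. Let $\Sigma$ denote this family.

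Second, I would derive a sign dichotomy on the cost shift $\Delta(r):=\EKendallTau{k}(r,\PP')-\EKendallTau{k}(r,\PP)$ attached to an arbitrary ranking $r$. Only voter $v$ contributes to $\Delta(r)$, and only through subsets in $\Sigma$, which gives
\[
\Delta(r)=\bigl|\{S\in\Sigma:t_r(S)=c'\}\bigr|-\bigl|\{S\in\Sigma:t_r(S)=c\}\bigr|.
\]
If $c\succ_r c'$, no set containing both $c$ and $c'$ can have $c'$ as its top in $r$, so $\Delta(r)\le 0$; symmetrically $\Delta(r)\ge 0$ whenever $c'\succ_r c$.

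Third, I would use this dichotomy to conclude. Pick any consensus $r^\dagger$ of $\PP'$. Combining $\EKendallTau{k}(r^*,\PP)\le\EKendallTau{k}(r^\dagger,\PP)$ (optimality of $r^*$ in $\PP$) with $\EKendallTau{k}(r^\dagger,\PP')\le\EKendallTau{k}(r^*,\PP')$ (optimality of $r^\dagger$ in $\PP'$) yields $\Delta(r^\dagger)\le\Delta(r^*)$. In the easy case where $r^*$ has $c\succ c'$ and $r^\dagger$ has $c'\succ c$, the dichotomy squeezes $0\le\Delta(r^\dagger)\le\Delta(r^*)\le 0$, forcing both to vanish, so $r^*$ is itself a consensus of $\PP'$ and we may take $\tilde r=r^*$. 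The remaining configurations, where $r^*$ and $r^\dagger$ orient the pair $(c,c')$ the same way, I would handle by an exchange argument inside $r^\dagger$: swap $c$ with the candidate immediately above it, re-apply the subset-based bookkeeping of the second step — this time parameterized by the pair currently being swapped rather than by $(c,c')$ — to argue that as long as $c$ sits below position $i$ such a swap cannot increase $\EKendallTau{k}(\cdot,\PP')$, and iterate until $c$ reaches position $i$.

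The main obstacle lies precisely in this exchange argument: the sign dichotomy of the second step only controls the relative order of $c$ and the specific partner $c'$, whereas monotonicity requires control over $c$'s absolute rank in the consensus. Making the exchange rigorous will hinge on exploiting the optimality of $r^*$ — which provides $i$ as a concrete target rank — in combination with the setwise structure to ensure that each adjacent upward swap of $c$ inside $r^\dagger$ has nonpositive net cost under $\PP'$.
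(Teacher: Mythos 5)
Your steps 1 and 2 are correct and coincide with the paper's bookkeeping, but the proof has a genuine gap exactly where you flag it, and it is not a technicality: the configuration in which the consensus rankings of $\PP'$ rank $c$ above $c'$ (as $r^*$ does) is the generic case, and the sign dichotomy $\Delta\le 0$ is too coarse to decide it. The exchange argument you sketch cannot work in the form stated: if $r^\dagger$ is optimal for $\PP'$, no adjacent swap can strictly decrease $\EKendallTau{k}(\cdot,\PP')$, so the claim that each upward swap of $c$ ``cannot increase'' the cost is exactly the assertion that some optimum of $\PP'$ ranks $c$ higher --- i.e.\ the conclusion itself --- and nothing in the proposal supplies it. Moreover your target ($\rk(c,\tilde r)\le i$ for arbitrary $i$) is stronger than the monotonicity property at stake (winners, i.e.\ $i=1$), and the magnitude of $\Delta(r)$ depends on \emph{which} candidates lie below $c$ in $r$ (through $B_c(r)\cap B_{c}(r_v)$), not merely on $c$'s rank, so an exchange pegged to the absolute rank $i$ runs into real difficulties.

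The missing ingredient --- and the heart of the paper's proof --- is quantitative rather than a sign condition. For $c\succ_r c'$ you have $\Delta(r)=-\lvert\{S\in\Sigma : t_r(S)=c\}\rvert$, and when $c$ is ranked \emph{first} in $r$ every $S\in\Sigma$ satisfies $t_r(S)=c$, so $\Delta(r)=-\lvert\Sigma\rvert$, which is the global minimum of $\Delta$ over all rankings (the paper phrases this as: the decrease $\sum_{i=0}^{k-2}\binom{\lvert B_{c'}(r_v)\cap B_c(r)\rvert}{i}$ is maximal when $c$ is ranked first). Hence if $c$ is a winner and $r^*$ is a consensus of $\PP$ ranking $c$ first, then for every ranking $r$ one gets $\EKendallTau{k}(r^*,\PP')=\EKendallTau{k}(r^*,\PP)+\Delta(r^*)\le\EKendallTau{k}(r,\PP)+\Delta(r)=\EKendallTau{k}(r,\PP')$, so $r^*$ remains a consensus of $\PP'$ and $c$ remains a winner; up-ranking by several positions is a sequence of such adjacent swaps, and no case analysis on $r^\dagger$ or exchange argument is needed (your squeeze for the case $c'\succ_{r^\dagger}c$, while correct, becomes superfluous). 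Finally, the statement also has a dual half (``down-ranking cannot enable a loser to win''), which your proposal does not address; it follows either from the symmetric computation, as in the paper, or by contraposition from the up-ranking half.
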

\begin{proof}
Let \(r\) be a ranking and \(v\) be a voter such that \(c\) and \(c'\) are consecutive in \(r_v\) and \(c' \succ_{r_v} c\). Let us denote by \(r_v^{c\leftrightarrow c'}\) the ranking obtained from \(r_v\) by switching the positions of \(c\) and \(c'\). Then, the sets \(S\) for which \(\tp_{r_v}(S)\!\neq\! \tp_{r_v^{c\leftrightarrow c'}}(S)\) are of the form \(\{c,c'\} \cup S'\) where \(S' \subseteq \bel_{c}(r_v)\). Furthermore, if such a set \(S\) contains a candidate \(c''\) such that \(c'' \succ_r \tp_r(\{c,c'\})\), then we will both have \(\tp_r(S) \neq \tp_{r_v}(S)\) and \(\tp_r(S) \neq t_{r_v^{c\leftrightarrow c'}}(S)\). Hence, the sets which account for the difference between \(\EKendallTau{k}(r,r_v)\) and \(\EKendallTau{k}(r,r_v^{c\leftrightarrow c'})\) are of the form \(\{c,c'\} \cup S'\) where \(S' \subseteq \bel_{c}(r_v) \cap \bel_{\tp_r(\{c,c'\})}(r)\).

More precisely, \revised{using Equation~\ref{eq:deltaKT}}, we obtain that  \(\EKendallTau{k}(r,r_v^{c\leftrightarrow c'})\) is equal to:
\begin{itemize}
\item \(\EKendallTau{k}(r,r_v) - \sum_{i=0}^{k-2}\binom{|\bel_{c'}(r_v)\cap \bel_{c}(r)|}{i}\) if \(c \succ_r c'\);
\item \(\EKendallTau{k}(r,r_v) + \sum_{i=0}^{k-2}\binom{
    |\bel_c(r_v)\cap \bel_{c'}(r)|}{i}\) if \(c' \succ_r c\).
\end{itemize}
Hence, \(\EKendallTau{k}(r,r_v)\) will decrease if \(r\) ranks \(c\) before \(c'\) and the decrease is maximal when \(c\) is ranked first in \(r\) (because it maximizes \(|\bel_{c'}(r_v)\cap \bel_{c}(r)|\)). Repeating this argument shows that no winner is harmed by up-ranking. Similarly, \(\EKendallTau{k}(r,r_v)\) will increase if \(r\) ranks \(c'\) before \(c\) and the increase is maximal when \(c'\) is ranked first in \(r\) (because it maximizes \(|\bel_c(r_v)\cap \bel_{c'}(r)|\)). Repeating this argument shows that no loser can win by down-ranking.  
\end{proof}

\begin{proposition} \label{prop:Unanimity}
  The \(k\)-wise Kemeny rule satisfies unanimity.
\end{proposition}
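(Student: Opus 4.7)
The plan is to argue by contradiction. Suppose some consensus ranking $r^*$ ranks $c'$ above $c$, even though every voter has $c\!\succ_{r_v}\!c'$. Let $p_{c'}\!<\!p_c$ be their positions in $r^*$, and let $r'$ be the ranking obtained from $r^*$ by swapping $c$ and $c'$ (leaving every other candidate at its current position). I will show $\EKendallTau{k}(r',\PP)\!<\!\EKendallTau{k}(r^*,\PP)$, contradicting the optimality of $r^*$. Working from the definition $\EKendallTau{k}(r,\PP)\!=\!\sum_{S\in\SubsetSet{k}}\sum_{v}\mathbbm{1}_{t_r(S)\neq t_{r_v}(S)}$, I would decompose the difference $\Delta\!\coloneqq\!\EKendallTau{k}(r^*,\PP)-\EKendallTau{k}(r',\PP)$ according to whether a set $S$ contains $c$, $c'$, both or neither.

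Sets disjoint from $\{c,c'\}$ contribute $0$ because $t_{r^*}(S)\!=\!t_{r'}(S)$. For a set $S\!\supseteq\!\{c,c'\}$, writing $T\!=\!S\setminus\{c,c'\}$, either some $x^*\!\in\!T$ lies above $c'$ in $r^*$, in which case $x^*$ still lies above $c$ in $r'$ and the set contributes $0$; or $c'$ is the top of $S$ in $r^*$, in which case $c$ becomes the top of $S$ in $r'$. In the second situation $t_{r_v}(S)\!\neq\!c'$ for every voter (because $c\!\in\!S$ and $c\!\succ_{r_v}\!c'$), so the contribution is $\mathbbm{1}_{c=t_{r_v}(S)}\!\geq\!0$. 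Crucially, for the set $S\!=\!\{c,c'\}\!\in\!\SubsetSet{k}$ one has $t_{r_v}(S)\!=\!c$ for every voter, so this set alone contributes exactly $n$ to $\Delta$.

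The subtle part, which I expect to be the main obstacle, is handling the sets that contain exactly one of $c,c'$; swapping positions can flip top choices unfavourably inside these sets. I would pair each such set $S\!=\!T\cup\{c\}$ with $S''\!=\!T\cup\{c'\}$ and show that their joint contribution to $\Delta$ is non-negative for every voter. Setting $x^*\!=\!t_{r^*}(T)$ at position $q$ and $z^v\!=\!t_{r_v}(T)$, only the case $p_{c'}\!<\!q\!<\!p_c$ yields non-zero terms, and there the contribution of the pair simplifies to
\[
\mathbbm{1}_{x^*\neq t_{r_v}(S)}-\mathbbm{1}_{c\neq t_{r_v}(S)}+\mathbbm{1}_{c'\neq t_{r_v}(S'')}-\mathbbm{1}_{x^*\neq t_{r_v}(S'')}.
\]
A short case split on the three possible rankings of $\{c,c',z^v\}$ compatible with $c\!\succ_{r_v}\!c'$ (namely $z^v\!\succ\!c\!\succ\!c'$, $c\!\succ\!z^v\!\succ\!c'$, $c\!\succ\!c'\!\succ\!z^v$) shows this quantity is always $\geq 0$, using that $z^v,x^*\!\in\!T$ so they differ from $c$ and $c'$.

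Putting everything together, $\Delta$ is a sum of non-negative terms that already includes the strictly positive contribution $n$ from the set $\{c,c'\}$, so $\Delta\!\geq\!n\!>\!0$. This contradicts the optimality of $r^*$ and establishes unanimity.
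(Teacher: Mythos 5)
Your proof is correct: the swap-and-contradiction skeleton is the same as the paper's, and every step checks out (the case-B pairing and the three-way split on the voter's ordering of $c,c',z^v$ do give a nonnegative contribution in each case, with the set $\{c,c'\}$ supplying the strict term $n$). Where you genuinely differ from the paper is in the accounting. The paper fixes a single voter $r''$ and builds an injection from the sets that become disagreements after the swap to sets that were disagreements before: in one case it replaces $c$ by $c'$, in the other it maps $S\mapsto S\cup\{c\}$, and it concludes the per-voter strict inequality $\EKendallTau{k}(r',r'')<\EKendallTau{k}(r,r'')$ with $\{c,c'\}$ outside the image of the injection. You instead partition $\SubsetSet{k}$ by its intersection with $\{c,c'\}$, treat supersets of $\{c,c'\}$ directly, and use the cardinality-preserving pairing $T\cup\{c\}\leftrightarrow T\cup\{c'\}$. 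Your symmetric pairing has a concrete advantage: both members of a pair automatically lie in $\SubsetSet{k}$, whereas the paper's second map $S\mapsto S\cup\{c\}$ increases the cardinality by one and, as written, can leave $\SubsetSet{k}$ when $|S|=k$, a corner case the injection argument does not address (it is easily repaired, essentially by using your pairing). The paper's version buys a slightly sharper statement as stated, namely strict improvement against every individual voter, but your per-set bounds are also per-voter, so the same strengthening follows from your argument with one extra sentence.
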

\begin{proof}
Let \(c,c'\in C\) be two candidates and \(\PP\) be a preference profile such that for all ranking \(r''\) in \(\PP\), \(c\succ_{r''} c'\). Let \(r\) be a ranking such that \(c'\succ_r c\), and \(r'\) be the ranking obtained from \(r\) by exchanging the positions of \(c'\) and \(c\). Moreover, let \(K\) denote the set of candidates between \(c\) and \(c'\) in \(r\).
Let us assume for the sake of contradiction that \(r\) minimizes \(\EKendallTau{k}(\cdot, \PP)\). We will prove that for any \(r''\in \PP\), \(\EKendallTau{k}(r',r'') < \EKendallTau{k}(r,r'')\). Let \(r''\in \PP\) and \(S\subseteq C\) such that \(\tp_{r'}(S) \neq \tp_{r''}(S)\) and \(\tp_{r}(S) = \tp_{r''}(S)\). Then \(S\) must contain either \(c\) or \(c'\), and does not contain any element ranked higher than \(c'\) in \(r\) because otherwise we would have \(\tp_{r}(S) = \tp_{r'}(S)\). This implies that either \(\tp_{r'}(S) = c\) (if \(c\in S\)) or \(\tp_{r}(S) = c'\) (if \(c'\in S\)). These situations are exclusive: there cannot be both \(c\) and \(c'\) in \(S\) as we cannot have \(\tp_{r''}(S) = c'\) if \(c\in S\). To sum up, there are two possibilities:
\begin{enumerate}
    \item \(\tp_{r'}(S) = c\) and \(\tp_{r''}(S) = \tp_{r}(S) = c''\) is in \(K\). This implies that \(c''\) is the second choice of \(r'\) in \(S\) and that \(c'' \succ_{r''} c'\) as \(c'' \succ_{r''} c\). In this case, necessarily, \(c' \not \in S\) and we consider \(S' = (S\setminus\{c\})\cup\{c'\}\).
    \item \(\tp_{r'}(S) \in K\) and \(\tp_{r''}(S) = \tp_{r}(S) = c'\). In this case, necessarily, \(c \not \in S\) and we consider \(S' = S\cup\{c\}\).
\end{enumerate}
In both cases, we obtain a set \(S'\) such that \(\tp_{r'}(S') = \tp_{r''}(S')\) and \(\tp_{r}(S') \neq \tp_{r''}(S')\). Note that any set \(S\) will induce a different \(S'\) and that \(\{c,c'\}\) is not one of these sets \(S'\).  As we also have \(\tp_{r'}(\{c,c'\}) = \tp_{r''}(\{c,c'\})\) and \(\tp_{r}(\{c,c'\}) \neq \tp_{r''}(\{c,c'\})\), this proves that for any \(r''\in \PP\), \(\EKendallTau{k}(r',r'') < \EKendallTau{k}(r,r'')\) and hence the claim. 
\end{proof}

Besides, the \(k\)-wise Kemeny rule does not satisfy \emph{Independence of irrelevant alternatives}, i.e., the relative positions of two candidates in a consensus ranking can depend on the presence of other candidates. Let us illustrate this point with the following example. 
\begin{example}
Considering the preference profile from Example~\ref{ex:Baldiga},  the only consensus ranking for \(\EKendallTau{3}\) is $c_1\!\succ\!c_2\!\succ \!c_3$. Yet, without \(c_3\) the only consensus ranking would be \(c_2\!\succ\!c_1\).
\end{example}

Lastly, note that there exists a noise model such that the \(k\)-wise Kemeny rule can be interpreted as a maximum likelihood estimator~\citep{conitzer2009preference}. In this view of voting, one assumes that there exists a ``correct'' ranking $r$, and each 
vote corresponds to a noisy perception of this
correct ranking. Consider the conditional probability measure $\mathbb{P}$ on \(\RankSet(C)\) defined by \(\mathbb{P}(r'|r)\!\propto\! e^{-\EKendallTau{k}(r,r')}\). It is easy to convince oneself that the \(k\)-wise Kemeny rule returns a ranking $r^*$ that maximizes \(\mathbb{P}(\PP|r^*)\!=\!\prod_{r'\in \PP} \mathbb{P}(r'|r^*) \) and is thus a maximum likelihood estimate of \(r\).

\subsection{Computational Complexity of \(k\)-KAP}
We now turn to the algorithmic study of \(k\)-KAP. After providing a hardness result, we will design an efficient Fixed Parameter Tractable (FPT) algorithm for parameter \(m\). 

While \(k\)-KAP is obviously NP-hard for \(k\!=\!2\) as it then corresponds to determining a consensus ranking w.r.t. the Kemeny rule, we strengthen this result by showing that \HG{\(k\)-KAP-DEC} is also \HG{NP-complete} for \emph{any} constant value \(k\!\ge\!3\). 
To prove this result, we first need two lemmas.

\begin{lemma}\label{lem:dominated set}
If the candidates in a  set \(S\subseteq C\) are ranked in the \(|S|\) last positions by \revised{all voters} and in the same order, then for any \(k\ge 2\), any consensus ranking w.r.t. the \(k\)-wise Kemeny rule has the same property.
\end{lemma}
\begin{proof}
This is a simple consequence of the unanimity property that is satisfied by the \(k\)-wise Kemeny rule. 
\end{proof}

\begin{lemma}\label{lem:power to factor}
For any \(p\in \mathbb{N}^*\) and \(\varepsilon\in (0,\frac{1}{2p})\) we have the following inequality: 
\[(1+ \varepsilon)^p < 1+ 2p\varepsilon\] 
\end{lemma}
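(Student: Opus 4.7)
The plan is to proceed by induction on $p$. The base case $p=1$ is immediate: $(1+\varepsilon)^1 = 1 + \varepsilon < 1 + 2\varepsilon$ since $\varepsilon > 0$.

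For the inductive step, the key observation is that the hypothesis $\varepsilon \in (0, 1/(2(p+1)))$ automatically implies $\varepsilon \in (0, 1/(2p))$, which is exactly the range required to invoke the induction hypothesis. Multiplying the resulting inequality $(1+\varepsilon)^p < 1 + 2p\varepsilon$ by the positive factor $(1+\varepsilon)$ yields an upper bound of the form $1 + (2p+1)\varepsilon + 2p\varepsilon^2$ on $(1+\varepsilon)^{p+1}$. Comparing this with the target value $1 + 2(p+1)\varepsilon$ reduces the remaining task to verifying $2p\varepsilon^2 < \varepsilon$, i.e.\ $2p\varepsilon < 1$, which is precisely guaranteed (with strict inequality) by $\varepsilon < 1/(2(p+1)) < 1/(2p)$.

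The argument is essentially mechanical, so I do not foresee any real obstacle; the only point requiring mild care is to check that the strict inequalities are preserved at each step (they are, since $\varepsilon>0$ and $1+\varepsilon>0$). A purely algebraic alternative via the binomial expansion is also available: writing $(1+\varepsilon)^p = 1 + p\varepsilon + \sum_{i=2}^{p}\binom{p}{i}\varepsilon^i$ and using the bound $\binom{p}{i}\varepsilon^i \leq (p\varepsilon)^i/i!$, one can compare the tail to a geometric series and exploit $p\varepsilon < 1/2$. However, the induction route is shorter and avoids any analytic estimate, so it is the natural choice for a short lemma of this kind.
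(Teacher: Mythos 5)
Your proof is correct and follows essentially the same route as the paper: induction on $p$, multiplying the induction hypothesis by $(1+\varepsilon)$ and absorbing the term $2p\varepsilon^2 < \varepsilon$ using $2p\varepsilon < 1$. No issues.
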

\begin{proof}
We prove the claim by induction. It is obvious for \(p\!=\!1\).
Consider the claim true for \(p\!=\!k\), then for \(\varepsilon\in (0,\frac{1}{2(k+1)})\)
\begin{align*}
 (1\!+\!\varepsilon)^{k+1}\!&=\!(1+ \varepsilon)(1\!+\!\varepsilon)^k 
                <\! (1\!+\!\varepsilon)(1\!+\!2k\varepsilon) \\                  &=\!1\!+\! 2k\varepsilon\!+\!\varepsilon\!+\! 2k\varepsilon^2
            <\! 1\!+\!2(k+1)\varepsilon 
\end{align*} 
where the first inequality uses the induction hypothesis and the second inequality uses the fact that \(2k\varepsilon^2\!<\!\varepsilon\) because \(2k\varepsilon\!<\!1\) for \(\varepsilon \!\in\!(0,\frac{1}{2(k+1)})\). 
\end{proof}

We can now prove the hardness result, by using a reduction from \HG{\(2\)-KAP-DEC}.

\begin{theorem}
\label{thrm:hardness}
For any constant \(k\!\ge\!3\), \HG{\(k\)-KAP-DEC is NP-complete}, even if the number of voters equals 4 or if the average range of candidates \HG{is less than or equal to} 2 (where the range of a candidate \(c\) is defined by \(\max_{r\in \PP}\rk(c,r) - \min_{r\in \PP}\rk(c,r) +1\) and the average is taken over all candidates). 
\end{theorem}
\begin{proof}
\HG{Membership in NP follows from Proposition~\ref{prop:Complexity}.} 
We obtain our hardness result via a reduction from the standard Kemeny aggregation problem (\HG{2-KAP-DEC}), which is known to be \HG{NP-complete} \citep{bartholdi1989voting}. Consider a preference profile \(\PP\) with \(n\ge 1\) voters and \(m \ge k \ge 3\) candidates \HG{and an integer $\tau$. We wish to determine if there exists a ranking $r$ of the $m$ candidates such that $\KendallTau(r,\PP) \le \tau$. Stated otherwise, we wish to determine if the Kemeny score of a consensus ranking is lower than or equal to $\tau$}. Note that we can assume \(m\ge k\) as \(k\) is a constant and 2-KAP is fixed parameter tractable w.r.t. \(m\) \citep{betzler2009fixed}. 
We add to the problem \(\lambda\!=\! 4nm^4\) candidates \(c^*_1, \ldots, c^*_{\lambda}\) that are ranked last by all voters and in the same order, i.e., \(c^*_1 \succ_r \ldots \succ_r c^*_{\lambda}\) for all \(r\) in \(\PP\). We denote the resulting set of candidates by \(C'\) (i.e., \(C'\!=\!C\cup\{c^*_1, \ldots, c^*_{\lambda}\}\)) and the resulting preference profile by \(\PP'\). By using Lemma \ref{lem:dominated set}, we will restrict our attention to rankings that rank these additional voters last and in the same order as the voters, because they are the only possible consensus rankings. \HG{Lastly, in the resulting $k$-KAP-DEC instance, we set $\tau' = (1 + \sum_{i=1}^{k-2} \binom{4nm^4}{i}) (\tau + 1) - 1$. }

Given two such rankings $r$ and $r'$, we have \revised{by Equation~\ref{eq:deltaKT}} that \(\EKendallTau{k}(r,r')\) is equal to:
\revised{\begin{align*}
    &\sum_{(c,c')\in C^2} \disagree_{c,c'}(r,r')\sum_{i=0}^{k-2} \binom{|\bel_c(r_C)\cap \bel_{c'}(r'_C)|+\lambda}{i}\\
    =&\sum_{(c,c')\in C^2} \disagree_{c,c'}(r,r') \left(1+\sum_{i=1}^{k-2} \binom{|\bel_c(r_C)\cap \bel_{c'}(r'_C)|+\lambda}{i}\right)
\end{align*}}
because there is no disagreement for \(\{c,c'\} \not\subseteq C\) and $\binom{|\bel_c(r_C)\cap \bel_{c'}(r'_C)|+\lambda}{0}\!=\!1$. 

From \(0 \le |\bel_c(r_C)\cap \bel_{c'}(r'_C)| < m\), we deduce:
\begin{align*}
    \sum_{i=1}^{k-2}\!\binom{\lambda}{i} \!\le\!
    \sum_{i=1}^{k-2}\!\binom{|\bel_c(r_C)\cap \bel_{c'}(r'_C)|+\lambda}{i} \!<\!
    \sum_{i=1}^{k-2}\!\binom{m+\lambda}{i}.
\end{align*}
Consequently:
\begin{align*}
    \KendallTau(r_C,r'_C)\left(1+\sum_{i=1}^{k-2} \binom{\lambda}{i}\right) \le
    \EKendallTau{k}(r,r') <
    \KendallTau(r_C,r'_C)\left(1+\sum_{i=1}^{k-2} \binom{m+\lambda}{i}\right)
\end{align*}
\revised{because \(\displaystyle\sum_{(c,c') \in C^2} \disagree_{c,c'}(r,r')=\KendallTau(r_C,r'_C)\); from which we obtain:}
\begin{align} 
    \KendallTau(r_C,\PP)\left(1+\sum_{i=1}^{k-2} \binom{\lambda}{i}\right) \le
    \EKendallTau{k}(r,\PP') <
    \KendallTau(r_C,\PP)\left(1+\sum_{i=1}^{k-2} \binom{m+\lambda}{i}\right) \label{eq:ineq}
\end{align}
because \(\sum_{r'\in\PP'}\KendallTau(r_C,r'_C)\!=\!\KendallTau(r_C,\PP)\) and \(\sum_{r'\in\PP'}\EKendallTau{k}(r,r')\!=\!\EKendallTau{k}(r,\PP')\), provided that \(\mathtt{tail}_\lambda(r)\!=\!c^*_1\!\succ_r\!\ldots \!\succ_r\!c^*_{\lambda}\) and \(\mathtt{tail}_\lambda(r')\!=\!c^*_1\!\succ_{r'}\!\ldots \!\succ_{r'}\!c^*_{\lambda}\) \(\forall r'\!\in\!\PP'\).

Now, note that:
\begin{align} 
\binom{m+\lambda}{i} = \frac{\lambda! }{i!(\lambda - i)!} \frac{\prod_{j=1}^m(\lambda+j)}{\prod_{j=1}^m(\lambda-i+j)} 
= \binom{\lambda}{i} \prod_{j=1}^m \frac{\lambda+j}{\lambda+j-i} \label{eq:lambda_i}.
\end{align}

If one sets \(\lambda\!=\! 4nm^4\), the following inequalities hold:
\begin{align*}
    \frac{4nm^4+j}{4nm^4+j-i} &= 1+\frac{i}{4nm^4+j-i} \le 1+\frac{i}{4nm^4-i} \\&\le 1+\frac{2i}{4nm^4} \le 
    1+\frac{1}{2nm^3}
\end{align*}
where the second inequality follows from \(4nm^4/(4nm^4-i)\le 2\) for \(i\in \llbracket 1,m\rrbracket\). 
 
From Equation~\ref{eq:lambda_i}, we deduce then:
\begin{align*}
    \binom{m+4nm^4}{i} \le \binom{4nm^4}{i}(1+\frac{1}{2nm^3})^m \le \binom{4nm^4}{i} (1+\frac{1}{nm^2})
\end{align*}
where the second inequality follows from Lemma~\ref{lem:power to factor} with \(\varepsilon\!=\!\frac{1}{2nm^3}\) because $\frac{1}{2nm^3}\!<\!\frac{1}{2m}$ for $m\!\ge\!3$.
 
Coming back to Equation~\ref{eq:ineq}, this implies:
\begin{align*}
    \KendallTau(r_C,\PP) \le
    \frac{\EKendallTau{k}(r,\PP')}{1+\sum_{i=1}^{k-2} \binom{4nm^4}{i}} <
    \KendallTau(r_C,\PP)+\frac{1}{nm^2}\KendallTau(r_C,\PP)
\end{align*}
and therefore:
\begin{align*}
    \KendallTau(r_C,\PP) \le
    \frac{\EKendallTau{k}(r,\PP')}{1+\sum_{i=1}^{k-2} \binom{4nm^4}{i}} <
    \KendallTau(r_C,\PP)+1
\end{align*}
because \(\KendallTau(r_C,\PP) \le n\binom{m}{2} \le nm^2\). \HG{In particular, $\KendallTau(r_C,\PP) \le \tau$ for an integer $\tau$ iff 
$\EKendallTau{k}(r,\PP') \le (1 + \sum_{i=1}^{k-2} \binom{4nm^4}{i}) (\tau + 1) - 1 = \tau'$. This shows that $(\PP,\tau)$ is a yes instance of 2-KAP-DEC iff $(\PP',\tau')$ is a yes instance of $k$-KAP-DEC.}

It is known that \HG{2-KAP-DEC} is \HG{NP-complete} even if the number \(n\) of voters equals 4 \citep{dwork2001rank} and even if the average range of candidates equals 2 \citep{betzler2009fixed}. As the reduction above preserves the number of voters and decreases the average range of candidates, the same results hold for \HG{\(k\)-KAP-DEC}. 
\end{proof}

Although the above proof uses a conversion from the \(k\)-wise Kemeny rule to the standard Kemeny rule, note that it means in no way that both rules are equivalent. \revised{There indeed exist} instances with arbitrary large sets of candidates such that the \(k\)-wise Kemeny rule differs from the Kemeny rule. Consider for instance the election described in Example \ref{ex:Baldiga}. The only \(k\)-wise Kemeny consensus ranking is \(c_1\succ c_2 \succ c_3\), while the only pairwise Kemeny consensus ranking is \(c_2\succ c_3 \succ c_1\). Now modify the preference profile of the previous election by adding candidates \(c_4\) to \(c_k\), with \(k\) arbitrarily large such that for any ranking \(r\) in the new preference profile, \(\mathtt{head}_{k-3}(r) = c_k \succ c_{k-1} \succ \ldots \succ c_4\). With this new preference profile, the only \(k\)-wise Kemeny consensus ranking is \(c_k \succ c_{k-1} \succ \ldots \succ c_4 \succ c_1\succ c_2 \succ c_3\), while the only pairwise Kemeny consensus ranking is \(c_k \succ c_{k-1} \succ \ldots \succ c_4 \succ c_2\succ c_3 \succ c_1\). 

\medskip

Despite Theorem~\ref{thrm:hardness}, \(k\)-KAP is obviously FPT w.r.t. the number $m$ of candidates, by simply trying the \(m!\) rankings in \(\RankSet(C)\).  
We now design a dynamic programming procedure which significantly improves this time complexity.

\begin{proposition}
If $r^*$ is an optimal ranking for $k$-KAP, then $\EKendallTau{k}(r^*,\PP)\!=\!d_{\mathtt{KT}}^k(C)$, where, for any subset $S\subseteq C$, $d_{\mathtt{KT}}^k(S)$ is defined by the recursive relation:
\begin{align}
    d_{\mathtt{KT}}^k(S) & = \min_{c\in S}  [d_{\mathtt{KT}}^k(S\setminus \{c\}) \notag\\ & + \sum_{r \in \PPr{S}} \sum_{c'\succ_{r} c} \sum_{i=0}^{k-2}  \binom{|S|-\rk(c',r)-1}{i}]
    \label{eq:progdyn}\\
    d_{\mathtt{KT}}^k(\emptyset) & = 0. \notag
\end{align}
\end{proposition}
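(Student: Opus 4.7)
The plan is to interpret $d^k_{\mathtt{KT}}(S)$ as the optimal value of $k$-KAP on the restricted instance $(S,\PPr{S})$, so that $d^k_{\mathtt{KT}}(C)=\EKendallTau{k}(r^*,\PP)$ is a special case; the identity is then proved by induction on $|S|$. The base case $d^k_{\mathtt{KT}}(\emptyset)=0$ is immediate. For the inductive step I would consider an optimal ranking of $(S,\PPr{S})$, call $c$ its top candidate, write it as $c\succ\tilde r$ with $\tilde r$ a ranking of $S\setminus\{c\}$, and decompose its $\EKendallTau{k}$ cost on $\PPr{S}$ according to whether a set $T\in\SubsetSet{k}(S)$ contains $c$ or not.

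Subsets $T\in\SubsetSet{k}(S)$ with $c\notin T$ lie in $\SubsetSet{k}(S\setminus\{c\})$; for any such $T$ and any voter $r\in\PPr{S}$, both $t_{c\succ\tilde r}(T)$ and $t_r(T)$ depend only on the restrictions of $c\succ\tilde r$ and of $r$ to $S\setminus\{c\}$. The corresponding partial cost is therefore $\EKendallTau{k}(\tilde r,\PPr{S\setminus\{c\}})$, which by the induction hypothesis is minimised (over the choice of $\tilde r$) at $d^k_{\mathtt{KT}}(S\setminus\{c\})$. Subsets $T\in\SubsetSet{k}(S)$ containing $c$ all satisfy $t_{c\succ\tilde r}(T)=c$ independently of $\tilde r$, so their total contribution depends only on the choice of $c$; this is the quantity that must match the inner sum of~(\ref{eq:progdyn}).

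To verify this match, I would carry out a counting argument analogous to the one behind~(\ref{eq:deltaKT}). For each voter ranking $r\in\PPr{S}$, the subsets $T\in\SubsetSet{k}(S)$ with $c\in T$ and $t_r(T)\neq c$ can be enumerated by partitioning them according to their actual top element $c':=t_r(T)$: necessarily $c'\succ_r c$, and $T=\{c,c'\}\cup T'$ with $T'\subseteq\{x\in S\colon c'\succ_r x\}\setminus\{c\}$ and $|T'|\le k-2$. Since $|\{x\in S\colon c'\succ_r x\}|=|S|-\rk(c',r)$ and this set already contains $c$, the admissible $T'$ of size $i$ number $\binom{|S|-\rk(c',r)-1}{i}$. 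Summing over $i\in\{0,\ldots,k-2\}$, over $c'$ with $c'\succ_r c$, and over $r\in\PPr{S}$ recovers exactly the bracketed expression in~(\ref{eq:progdyn}); taking the minimum over the choice of top candidate $c$ then yields the recurrence. The only delicate point, and the main source of potential error, is the $-1$ in the binomial coefficient, which reflects the forced presence of $c$ in $T$ (and hence its exclusion from $T'$) and is what distinguishes the argument from a direct reuse of~(\ref{eq:deltaKT}).
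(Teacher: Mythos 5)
Your proposal is correct and follows essentially the same route as the paper: both partition $\SubsetSet{k}(S)$ according to whether a subset contains the top candidate $c$, observe that the contribution of the $c$-containing subsets depends only on $c$ (since $t_{\hat r}(T)=c$ for all such $T$), and reduce the remaining contribution to the subproblem on $S\setminus\{c\}$, with the count $\binom{|S|-\rk(c',r)-1}{i}$ arising exactly as you describe (the paper phrases it via $|B_c(\hat r)\cap B_{c'}(r)|=|S|-\rk(c',r)-1$, citing the reasoning behind its Equation~\ref{eq:deltaKT}). Your explicit induction on $|S|$ and your direct enumeration of the sets $T=\{c,c'\}\cup T'$ are only presentational variants of the paper's argument, including the handling of the $-1$ for the forced membership of $c$.
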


\begin{proof}
Given \(S\!\subseteq\!C\) and \(c\!\in\!S\), let us define \(\RankSet_c(S)\) as \(\{r\!\in\! \RankSet(S) \mbox{ s.t. } \tp_r(S)\!=\!c\}\).
The set \(\SubsetSet{k}(S)\) can be partitioned into  \(\SubsetSet{k}_c(S)\!=\!\{S'\in \SubsetSet{k}(S) \text{ s.t. } c \in S'\}\) and \(\SubsetSet{k}_{\overline{c}}(S)\!=\!\{S'\in \SubsetSet{k}(S) \text{ s.t. } c \not\in S'\}\!=\! \SubsetSet{k}(S\setminus\{c\})\). Given a preference profile $\PP$ over $C$ and a ranking \(\hat{r}\!\in\!\RankSet_c(S)\), the summation defining \(\EKendallTau{k}(\hat{r},\PPr{S})\) breaks down as follows:
\begingroup
\allowdisplaybreaks
\begin{align}
   &\EKendallTau{k}(\hat{r},\PPr{S}) = \sum_{r \in \PPr{S}} \sum_{S'\in \SubsetSet{k}(S)}    \disagree_{S'}(\hat{r},r)\notag\\
   & = \EKendallTau{k}(\hat{r}_{S\setminus\{c\}},\PP_{S\setminus\{c\}}) + \sum_{r \in \PPr{S}} \sum_{S'\in \SubsetSet{k}_{c}(S)}  \disagree_{S'}(\hat{r},r). \label{eq:initPD}
\end{align}
\endgroup
Using the same reasoning as \revised{in Equation~\ref{eq:deltaKT}} on page~\pageref{eq:deltaKT}, the second summand in Equation~\ref{eq:initPD} can be rewritten as follows:
\begin{equation*}
    \sum_{r \in \PPr{S}} \sum_{c' \succ_r c} \sum_{i=0}^{k-2} \binom{|\bel_c(\hat{r})\cap \bel_{c'}(r)|}{i}
\end{equation*}
because \HG{\(\tp_{\hat{r}}(S') = c\)} for all \(S'\!\!\in\!\! \SubsetSet{k}_{c}(S)\). Note that \(\bel_c(\hat{r}) \!=\! S\!\setminus\!\{c\}\) and \(\bel_{c'}(r)\!=\!\{c''\!\in\!S \mbox{ s.t. } c'\!\succ_{r}\!c''\} \subseteq\!S\), thus \(|\bel_c(\hat{r})\cap \bel_{c'}(r)|\!=\!|S|-\rk(c',r)-1\). Hence, $\EKendallTau{k}(\hat{r},\PPr{S})$ is equal to:
\begin{equation} 
 \EKendallTau{k}(\hat{r}_{S\setminus\{c\}},\PP_{S\setminus\{c\}}) + \sum_{r \in \PPr{S}} \sum_{c'\succ_{r} c} \sum_{i=0}^{k-2}  \binom{|S|-\rk(c',r)-1}{i}. \label{eq:initPDv2}
\end{equation}
Consider now a ranking \(r^* \in \RankSet(S)\) such that \(\EKendallTau{k}(r^*,\PPr{S})=\min_{r \in \RankSet(S)} \EKendallTau{k}(r,\PPr{S})\). We have:
\begin{align*}
    \EKendallTau{k}(r^*,\PPr{S})
    & = \min_{c \in S} \min_{\hat{r} \in \RankSet_c(S)} \EKendallTau{k}(\hat{r},\PPr{S})\\
    & = \min_{c \in S} \Bigl((\min_{\hat{r} \in \RankSet(S\setminus\{c\})} \EKendallTau{k}(\hat{r},\PPr{S\setminus\{c\}}))\\
    & + \sum_{r \in \PPr{S}} \sum_{c'\succ_{r} c} \sum_{i=0}^{k-2}  \binom{|S|-\rk(c',r)-1}{i} \Bigr)
\end{align*}
because the second summand in Equation~\ref{eq:initPDv2} does not depend on $\hat{r}$ (it only depends on $c$, which is the argument of the first min operator). If one denotes \(\min_{r \in \RankSet(S)} \EKendallTau{k}(r,\PPr{S})\) by \(d_{\mathtt{KT}}^k(S)\), one obtains Equation~\ref{eq:progdyn}. This concludes the proof. 
\end{proof}
A candidate \(c\!\in\!S\) that realizes the minimum in Equation~\ref{eq:progdyn} can be ranked in first position in an optimal ranking for \(\PP_S\). Once \(d_{\mathtt{KT}}^k(S)\) is computed for each \(S\!\subseteq\!C\), a ranking $r^*$ achieving the optimal value \(d_{\mathtt{KT}}^k(C)\) can thus be determined recursively starting from \(S\!=\!C\). The complexity of the induced dynamic programming method is \(O(2^mm^2n)\) as there are \(2^m\) subsets $S \subseteq C$ to consider and each value \(d_{\mathtt{KT}}^k(S)\) is computed in \(O(m^2n)\) by Equation~\ref{eq:progdyn}. The min operation is indeed performed on $m$ values and the sum \(\sum_{c'\succ_{r} c} \sum_{i=0}^{k-2}  \binom{|S|-\rk(c',r)-1}{i}\) is computed incrementally in \(O(m)\), which entails an \(O(mn)\) complexity for the second summand in Equation~\ref{eq:progdyn} (the $n$ factor is due to the sum over all \(r\!\in\!\PPr{S}\)). The computation of binomial coefficients \(\binom{p}{i}\) for \(i\!\in\!\{0,\ldots,k-2\}\) and \(p\!\in\!\{i,\ldots,m-2\}\) is performed in \(O(mk)\) in a preliminary step thanks to Pascal's formula.

\section{The \(k\)-Wise Majority Digraph\label{sec:Graph}}
We now propose and investigate a \(k\)-wise counterpart of the pairwise majority digraph, that will be used in a preprocessing procedure for \(k\)-KAP.

As stated in the introduction, the pairwise Kemeny rule is strongly related to the \emph{pairwise majority digraph}. We denote by \(\MG_{\PP}\) the pairwise majority digraph associated to profile \(\PP\). We recall that in this digraph, there is one vertex per candidate, and there is an arc from candidate \(c\) to candidate \(c'\) if a strict majority of voters prefers \(c\) to \(c'\). In the weighted pairwise majority digraph, each arc \((c,c')\) is weighted by  \(w_{\PP}(c,c') \coloneqq |\{r\in \PP \text{ s.t. } c \succ_r c'\}| - |\{r\in \PP \text{ s.t. } c' \succ_r c\}|\).  

\begin{example} \label{ex:MG}
Consider a profile \(\PP\) with 10 voters and 6 candidates such that:\\ 
-- 4 voters have preferences \(c_1 \succ c_2 \succ c_4 \succ c_3 \succ c_5 \succ c_6\);\\
-- 4 voters have preferences \(c_1 \succ c_3 \succ c_2 \succ c_4 \succ c_5 \succ c_6\);\\
-- 1 voter has preferences \(c_6 \succ c_1 \succ c_2 \succ c_4 \succ c_3 \succ c_5\);\\
-- 1 voter has preferences \(c_6 \succ c_1 \succ c_4 \succ c_3 \succ c_2 \succ c_5\).\\
The weighted pairwise majority digraph \(\MG_{\PP}\) is displayed on the left of Figure~\ref{fig:extendedMajorityGraph}.
\end{example}

\begin{figure}[hbt]
\begin{minipage}[c]{.46\linewidth} 
\begin{center}
\scalebox{1}{\begin{tikzpicture}
\clip (0,0) rectangle (6,6);
\Vertex[x=5.650,y=2.000,color=white,label=\large $c_1$,fontcolor=black]{1}
\Vertex[x=4.325,y=3.295,color=white,label=\large $c_2$,fontcolor=black]{2}
\Vertex[x=1.675,y=3.295,color=white,label=\large $c_3$,fontcolor=black]{3}
\Vertex[x=0.350,y=2.000,color=white,label=\large $c_4$,fontcolor=black]{4}
\Vertex[x=1.675,y=0.705,color=white,label=\large $c_5$,fontcolor=black]{5}
\Vertex[x=4.325,y=0.705,color=white,label=\large $c_6$,fontcolor=black]{6}
\Edge[,bend=-8.531,label=\footnotesize 10,distance=0.60,Direct](1)(2)
\Edge[,bend=-8.531,label=\footnotesize 10,distance=0.60,Direct](1)(3)
\Edge[,bend=-8.531,label=\footnotesize 10,distance=0.70,Direct](1)(4)
\Edge[,bend=-8.531,label=\footnotesize 10,distance=0.68,Direct](1)(5)
\Edge[,bend=8.531,label=\footnotesize 6,distance=0.65,Direct](1)(6)
\Edge[,bend=-8.531,label=\footnotesize 8,distance=0.65,Direct](2)(4)
\Edge[,bend=-8.531,label=\footnotesize 10,distance=0.65,Direct](2)(5)
\Edge[,bend=-8.531,label=\footnotesize 6,distance=0.26,Direct](2)(6)
\Edge[,bend=-8.531,label=\footnotesize 10,distance=0.65,Direct](3)(5)
\Edge[,bend=-8.531,label=\footnotesize 6,distance=0.58,Direct](3)(6)
\Edge[,bend=8.531,label=\footnotesize 2,distance=0.65,Direct](4)(3)
\Edge[,bend=-8.531,label=\footnotesize 10,distance=0.60,Direct](4)(5)
\Edge[,bend=-8.531,label=\footnotesize 6,distance=0.65,Direct](4)(6)
\Edge[,bend=-8.531,label=\footnotesize 6,distance=0.65,Direct](5)(6)
\end{tikzpicture}}  
\end{center}
   \end{minipage}
\hfill
   \begin{minipage}[c]{.46\linewidth}
   \begin{center}
\scalebox{1}{\begin{tikzpicture}
\clip (0,0) rectangle (6,6);
\Vertex[x=5.650,y=2.000,color=white,label=\large $c_1$,fontcolor=black]{1}
\Vertex[x=4.325,y=3.295,color=white,label=\large $c_2$,fontcolor=black]{2}
\Vertex[x=1.675,y=3.295,color=white,label=\large $c_3$,fontcolor=black]{3}
\Vertex[x=0.350,y=2.000,color=white,label=\large $c_4$,fontcolor=black]{4}
\Vertex[x=1.675,y=0.705,color=white,label=\large $c_5$,fontcolor=black]{5}
\Vertex[x=4.325,y=0.705,color=white,label=\large $c_6$,fontcolor=black]{6}
\Edge[,bend=-8.531,label=\footnotesize 48,distance=0.60,Direct](1)(2)
\Edge[,bend=-8.531,label=\footnotesize 48,distance=0.60,Direct](1)(3)
\Edge[,bend=-8.531,label=\footnotesize 48,distance=0.75,Direct](1)(4)
\Edge[,bend=-8.531,label=\footnotesize 48,distance=0.70,Direct](1)(5)
\Edge[,bend=+8.531,label=\footnotesize 30,distance=0.60,Direct](1)(6)
\Edge[,bend=-8.531,label=\footnotesize 1,distance=0.65,Direct](2)(3)
\Edge[,bend=-8.531,label=\footnotesize 28,distance=0.65,Direct](2)(4)
\Edge[,bend=-8.531,label=\footnotesize 32,distance=0.65,Direct](2)(5)
\Edge[,bend=-8.531,label=\footnotesize 20,distance=0.28,Direct](2)(6)
\Edge[,bend=-8.531,label=\footnotesize 1,distance=0.65,Direct](3)(4)
\Edge[,bend=-8.531,label=\footnotesize 27,distance=0.65,Direct](3)(5)
\Edge[,bend=-8.531,label=\footnotesize 16,distance=0.57,Direct](3)(6)
\Edge[,bend=-8.531,label=\footnotesize 4,distance=0.65,Direct](4)(3)
\Edge[,bend=-8.531,label=\footnotesize 25,distance=0.60,Direct](4)(5)
\Edge[,bend=-8.531,label=\footnotesize 14,distance=0.10,Direct](4)(6)
\Edge[,bend=-8.531,label=\footnotesize 6,distance=0.65,Direct](5)(6)
\Edge[,bend=-8.531,label=\footnotesize 2,distance=0.65,Direct](6)(5)
\end{tikzpicture}}
   \end{center}
   \end{minipage}
\caption{\label{fig:extendedMajorityGraph}Weighted \(k\)-wise majority digraph in Example~\ref{ex:MG} for $k=2$ (left) and $k=3$ (right).}
\end{figure}

From \(\MG_{\PP}\), we can define a set of \emph{consistent} rankings:

\begin{definition}
Let \(\mathcal{G}\) be a digraph whose vertices correspond to the candidates in \(C\). Let \(B_1(\mathcal{G}),\ldots, B_{\sigma(\mathcal{G})}(\mathcal{G})\) denote the subsets of \(C\) corresponding to the Strongly Connected Components (SCCs) of \( \mathcal{G} \), and \(\TopSortSet{\mathcal{G}}\) denote the set of linear orders \(<_{\mathcal{G}}\) on \(\{1, \ldots, \sigma(\mathcal{G})\}\) such that if there exists an arc \((c,c')\) from \(c \!\in\!B_i(\mathcal{G})\) to \(c'\!\in\! B_j(\mathcal{G})\) then  \(i\!<_{\mathcal{G}}\!j\). 
Given \(<_{\mathcal{G}}\in \!\TopSortSet{\mathcal{G}}\), we say that a ranking \(r\) is \emph{consistent} with \(<_{\mathcal{G}}\) if the candidates in \(B_i\) are ranked before the ones of \(B_j\) when \(i\!<_{\mathcal{G}}\!j\). 
\end{definition}

The following result states that, for any  \(<_{\MG_{\PP}} \in\! \TopSortSet{\MG_{\PP}}\), there exists a consensus ranking for \(\KendallTau\) among the rankings consistent with \(<_{\mathcal{G}_{\PP}}\).

\begin{theorem}[Theorem 16 in \cite{charon2010updated}, by Charon and Hudry] \label{thrm:charon}
Let \(\PP\) be a profile over \(C\) and assume that the SCCs of \(\MG_{\PP}\) are numbered according to a linear order \(<_{\MG_{\PP}} \in\! \TopSortSet{\MG_{\PP}}\). Consider the ranking \(r^*\), consistent with \(<_{\MG_{\PP}}\), obtained by the concatenation of rankings \(r_1^*,\ldots,r_{t(\MG_{\PP})}^*\) 
 where \(\KendallTau(r_i^*,\mathcal{P}_{B_i(\MG_{\PP})})\!=\!\min_{r \in \RankSet(B_i(\MG_{\PP}))} \KendallTau(r,\mathcal{P}_{B_i(\MG_{\PP})})\). We have:
\begin{equation*}
    \KendallTau(r^*,\PP)\!=\!\min_{r \in \RankSet(C)} \KendallTau(r,\PP).
\end{equation*}
That is, \(r^*\) is a consensus ranking according to the Kemeny rule. 
Furthermore, if \(\TopSortSet{\MG_{\PP}} = \{<_{\mathcal{G}_{\PP}}\}\) and \(w_{\PP}(c,c')>0\) for all \(c \in B_i(\MG_{\PP})\) and \(c' \in B_j(\MG_{\PP})\) when \(i\!<_{\mathcal{G}_{\PP}}\! j\), then all consensus rankings are consistent with \(<_{\mathcal{G}_{\PP}}\).
\end{theorem}
This result does not hold anymore if one uses \(\EKendallTau{k}\) (with \(k\!\ge\!3\)) instead of \(\KendallTau\), as shown by the following example.

\begin{example}
Let us denote by $\PP$ the profile of Example~\ref{ex:Baldiga}. 
The pairwise majority digraph \(\MG_{\PP}\) has three SCCs \(B_1(\MG_{\PP})=\{c_2\}\), \(B_2(\MG_{\PP})=\{c_3\}\) and \(B_3(\MG_{\PP})=\{c_1\}\).  In this example, \(\TopSortSet{\MG_{\PP}} = \{ <_{\MG_{\PP}} \}\) where \(1 <_{\MG_{\PP}} 2 <_{\MG_{\PP}} 3\). The only ranking consistent with \(<_{\MG_{\PP}}\) is \(c_2 \succ c_3 \succ c_1\) while the only consensus ranking w.r.t. the 3-wise Kemeny rule is \(c_1 \succ c_2 \succ c_3\).
\end{example}

In order to adapt Theorem~\ref{thrm:charon} to the $k$-wise Kemeny rule, we now introduce the concept of \emph{\(k\)-wise majority digraph}. Let \(\SubsetSet{k}_{cc'}(S)\!\!=\!\! \{S'\!\in\!\SubsetSet{k}(S) \text{ s.t. } \{c,c'\}\!\subseteq\!S'\}\). If \(S\) is not specified, it is assumed to be \(C\).
Given a ranking \(r\), we denote by \(\SubsetSet{k}_{r}(S,c,c')\) the set \(\{S'\!\!\in\!\!\SubsetSet{k}_{cc'}(S) \text{ s.t. } \tp_{r}(S')\!\!=\!\!c\}\). Given a profile \(\PP\), we denote by \(\phi^{k}_{\PP}(S,c,c')\) the value \(\sum_{r\in \PP} |\SubsetSet{k}_r(S,c,c')|\) and by \(w^k_{\PP}(S,c,c')\) the difference \(\phi^{k}_{\PP}(S,c,c')\!-\!\phi^{k}_{\PP}(S,c',c)\). This definition implies that \(w^k_{\PP}(S,c',c)\!\!=\!\!-w^k_{\PP}(S,c,c')\). 
The value \(w^k_{\PP}(S,c,c')\) is the net agreement loss that would be incurred by swapping \(c\) and \(c'\) in a feasible solution \(r\) of \(k\)-KAP where \(\rk(c',r)\!\!=\!\!\rk(c,r)\!+\!1\) and \(S\!\!=\!\!\bel_{c'}(r)\!\cup\!\{c,c'\}\).
If $\max_{S\in \SubsetSet{m}_{cc'}}$ $w^k_{\PP}(S,c,c')\!\!\geq\!\! 0$ (resp. \(\min_{S\in \SubsetSet{m}_{cc'}} \!w^k_{\PP}(S,c,c')\!\!>\!\!0\)) then, in a consensus ranking \(r\) for \(\EKendallTau{k}\) where \(c\) and \(c'\) would be consecutive, it is possible (resp. necessary) that \(c \succ_r c'\).
\revised{\begin{definition}
The \(k\)-wise majority digraph associated to a profile \(\PP\) over a set \(C\) of candidates is the digraph \(\EMG{k}_{\PP}\!=\!(\mathcal{V}, \mathcal{A})\), where \(\mathcal{V}\!=\!C\) and \((c,c')\!\in\!\mathcal{A}\) iff:
\begin{equation*}
    \exists S\in \SubsetSet{m}_{cc'} \text{ s.t. } w^k_{\PP}(S,c,c') > 0.
\end{equation*}
In the weighted \(k\)-wise majority digraph, each edge \((c,c')\) is weighted by:
\begin{equation*}
    w^k_{\PP}(c,c') \coloneqq \max_{S\in \SubsetSet{m}_{cc'}}
    w^k_{\PP}(S,c,c').    
\end{equation*}
\end{definition}}

Note that, if \(k\ge 3\), we may obtain edges \((c,c')\) and \((c',c)\) both with strictly positive weights (which is impossible in the pairwise case). For instance, for the profile \(\PP\) of Example~\ref{ex:MG},  \(\revtwo{w^3_{\PP}}(c_3,c_4)\!=\!\revtwo{w^3_{\PP}}(\{c_2,c_3,c_4\},c_3,c_4)\!=\!1\) and \(\revtwo{w^3_{\PP}}(c_4,c_3)\!=\!\revtwo{w^3_{\PP}}(\{c_3,c_4,c_5\},c_4,c_3)\!=\!4\). 
For illustration, let us explain how $\revtwo{w^3_{\PP}}(\{c_3,c_4,c_5\},c_4,c_3)$ yields $4$. 
\revtwo{Table~\ref{tab:IllustrationDef5} summarizes the number of times $c_4$ and $c_3$ appear in top position of $\{c_3,c_4\}$ or $\{c_3,c_4,c_5\}$ for each ranking $r$ in $\mathcal{P}$. By summing over $r\!\in\!\mathcal{P}$: $\phi^3_{\PP}(\{c_3,c_4,c_5\},c_4,c_3)-\phi^3_{\PP}(\{c_3,c_4,c_5\},c_3,c_4)\!=\!(4\times 2 + 2 + 2)\!-\!(4\times 2)\!=\!4$.}

\begin{table}[hbtp]
\begin{center}
\revtwo{
\caption{\label{tab:IllustrationDef5}Number of times $c_4$ and $c_3$ appear in top position of $\{c_3,c_4\}$ or $\{c_3,c_4,c_5\}$ for each ranking $r$ in the profile $\mathcal{P}$ of Example~\ref{ex:MG}.}
}
\begin{tabular}{ccc}
\hline
$r$ & $|\SubsetSet{3}_r(\{c_3,c_4,c_5\},c_4,c_3)|$ &
$|\SubsetSet{3}_r(\{c_3,c_4,c_5\},c_3,c_4)|$\\
\hline
$c_1 \succ c_2 \succ c_4 \succ c_3 \succ c_5 \succ c_6 $ $(\times 4)$ & 2 & 0 \\
$c_1 \succ c_3 \succ c_2 \succ c_4 \succ c_5 \succ c_6 $ $(\times 4)$ & 0 & 2 \\
$c_6 \succ c_1 \succ c_2 \succ c_4 \succ c_3 \succ c_5 $ $(\times 1)$ & 2 & 0 \\
$c_6 \succ c_1 \succ c_4 \succ c_3 \succ c_2 \succ c_5 $ $(\times 1)$ & 2 & 0 \\
\hline
\end{tabular}
\end{center}
\end{table}

The obtained weighted digraph \(\EMG{3}_{\PP}\) is shown on the right of Figure~\ref{fig:extendedMajorityGraph} \HG{(an efficient manner to compute a set $S$ maximizing \(w^k_{\PP}(S,c,c')\) will be explained later on page~\pageref{explanation:graph})}. Besides, for any \(\PP\), \(\EMG{2}_{\PP}\) is the pairwise majority digraph as \(\SubsetSet{2}_{cc'}(S) =  \{\{c,c'\}\}\) \(\forall S\!\in\! \SubsetSet{m}_{cc'}\). Theorem~\ref{thrm:charon} adapts as follows for an arbitrary \(k\): 

\begin{theorem} \label{prop:SCCs}
Let \(\PP\) be a profile over \(C\) and assume that the SCCs of \(\EMG{k}_{\PP}\) are numbered according to a linear order \(<_{\EMG{k}_{\PP}} \in\! \TopSortSet{\EMG{k}_{\PP}}\). Among the rankings consistent with \(<_{\EMG{k}_{\PP}}\), 
there exists a consensus ranking w.r.t. the \(k\)-wise Kemeny rule. 
Besides, if $\TopSortSet{\EMG{k}_{\PP}}$$=$$ \{ <_{\EMG{k}_{\PP}}\} $ 
and \(\min_{S\in \SubsetSet{m}_{cc'}} w^k_{\PP}(S,c,c')\!>\!0\)\footnote{Or, equivalently, \(\max_{S\in \SubsetSet{m}_{cc'}} w^k_{\PP}(S,c',c)\!<\!0\).} for all \(c\!\in\! B_i(\EMG{k}_{\PP})\) and \(c' \!\in\!B_j(\EMG{k}_{\PP})\) when \(i\!<_{\EMG{k}_{\PP}}\!j\), then all consensus rankings are consistent with \(<_{\EMG{k}_{\PP}}\).
\end{theorem}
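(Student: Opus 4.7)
The plan is to adapt the classical ``bubble-sort'' argument underlying Theorem~\ref{thrm:charon} to the $k$-wise setting, exploiting the fact that the quantities $w^k_{\PP}(S,c,c')$ are defined exactly as the marginal costs of swapping two adjacent candidates. I would start from an arbitrary consensus ranking $r^*$ and show that one can repeatedly swap adjacent candidates, without increasing $\EKendallTau{k}(\cdot,\PP)$, until the resulting ranking is consistent with $<_{\EMG{k}_{\PP}}$.

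If $r^*$ is not already consistent with $<_{\EMG{k}_{\PP}}$, then reading the SCC indices of the candidates along $r^*$ from top to bottom yields a sequence that is not non-decreasing, so there exist two adjacent candidates $c'$ and $c$ (with $c'$ just above $c$) such that $c\in B_i(\EMG{k}_{\PP})$, $c'\in B_j(\EMG{k}_{\PP})$ and $i <_{\EMG{k}_{\PP}} j$. Let $\tilde r$ denote the ranking obtained by swapping $c$ and $c'$ in $r^*$, and set $S^*=B_c(r^*)\cup\{c,c'\}$.

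The crux is to quantify the effect of this swap on $\EKendallTau{k}$. A short case analysis, mirroring the derivation of Equation~\ref{eq:deltaKT}, shows that the only subsets $S'\in \SubsetSet{k}$ whose top choice differs between $r^*$ and $\tilde r$ are those in $\SubsetSet{k}_{cc'}(S^*)$: subsets missing $c$ or $c'$ are unaffected by the swap, and for any $S'$ containing both together with a candidate lying above $c$ in $r^*$, that candidate remains the top choice in both rankings. On the remaining subsets, the top choice switches from $c'$ to $c$, so summing over $\PP$ yields $\EKendallTau{k}(\tilde r,\PP)-\EKendallTau{k}(r^*,\PP)=w^k_{\PP}(S^*,c',c)$. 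Since $<_{\EMG{k}_{\PP}}$ is a topological order of $\EMG{k}_{\PP}$ and $i <_{\EMG{k}_{\PP}} j$, no arc of $\EMG{k}_{\PP}$ goes from a vertex of $B_j$ to a vertex of $B_i$; in particular $(c',c)$ is not an arc, hence $w^k_{\PP}(S,c',c)\le 0$ for every $S\in \SubsetSet{m}_{cc'}$, and the swap does not increase the cost. Because $c$ and $c'$ were adjacent, no other pair of candidates changes its relative order, so the number of pairs $\{x,y\}$ with $x\succ_{r^*} y$ but $y$'s SCC preceding $x$'s SCC in $<_{\EMG{k}_{\PP}}$ strictly decreases. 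Iterating yields, in finitely many steps, a consistent ranking whose $\EKendallTau{k}$-cost is at most that of $r^*$, which is therefore also a consensus.

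For the second part, under the strict hypothesis $\min_{S\in \SubsetSet{m}_{cc'}}w^k_{\PP}(S,c,c')>0$ for all $c\in B_i$, $c'\in B_j$ with $i<_{\EMG{k}_{\PP}} j$, the same adjacent swap applied to any hypothetical consensus ranking not consistent with the (now unique) topological order would give $w^k_{\PP}(S^*,c',c)<0$, hence a ranking with strictly smaller cost, contradicting optimality. The main technical obstacle is precisely the case analysis identifying $\SubsetSet{k}_{cc'}(S^*)$ as the exact family of subsets whose top is affected by the swap; everything else reduces to the definition of $\EMG{k}_{\PP}$ and to standard properties of topological orderings of SCCs.
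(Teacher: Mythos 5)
Your proposal is correct and follows essentially the same route as the paper's proof: an adjacent-swap (bubble-sort) argument on a non-consistent (consensus) ranking, using the same set $S^*=\{c,c'\}\cup B_c(r^*)$ and the identity that the swap changes the cost by exactly $w^k_{\PP}(S^*,c',c)=-w^k_{\PP}(S^*,c,c')$, which is nonpositive (resp.\ strictly negative) by the absence of an arc from $B_j$ to $B_i$ (resp.\ by the strict hypothesis). Your explicit potential-function termination argument is simply a more detailed version of the paper's ``repetition of this argument'' step.
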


\begin{proof}
Assume that the SCCs of \(\EMG{k}_{\PP}\) are numbered according to a linear order  \(<_{\EMG{k}_{\PP}} \in\! \TopSortSet{\EMG{k}_{\PP}}\) and consider a ranking \(r\) which is not consistent with \(<_{\EMG{k}_{\PP}}\). 
Hence, there exists a pair \((c,c')\) such that \(c\) directly follows \(c'\) in \(r\) while \(c\!\in\! B_i(\EMG{k}_{\PP})\) and \(c'\!\in\! B_j(\EMG{k}_{\PP})\) with \(i\!<\!j\). Since \(i\!<\!j\), there is no arc from \(c'\) to \(c\) in \(\EMG{k}_{\PP}\) (i.e., \(\forall S\!\in\!\SubsetSet{m}_{cc'}, w^k_{\PP}(S,c,c')\!\geq\!0\)). Let \(S\) be the set composed of \(c'\) and all candidates placed after \(c'\) in \(r\), including \(c\). 
Then the ranking \(r^{c\leftrightarrow c'}\) obtained from \(r\) by exchanging the positions of \(c\) and \(c'\) verifies \(\EKendallTau{k}(r^{c\leftrightarrow c'},\PP)\!=\! \EKendallTau{k}(r,\PP)\!-\!w^k_{\PP}(S,c,c') \le \EKendallTau{k}(r,\PP)\). The repetition of this argument concludes the proof of the first claim. The second claim is proved similarly because, in this case, \(\EKendallTau{k}(r^{c\leftrightarrow c'},\PP)\!=\! \EKendallTau{k}(r,\PP)\!-\!w^k_{\PP}(S,c,c')\!<\! \EKendallTau{k}(r)\). 
\end{proof}

\begin{example}
The meta-graph of SCCs of \(\EMG{3}_{\PP}\) in Example \ref{ex:MG} is represented in Figure~\ref{Fig:SCCs}. The above result implies that there exists a consensus ranking among \(c_1 \succ c_2 \succ c_3 \succ c_4 \succ c_5 \succ c_6\), \(c_1 \succ c_2 \succ c_3 \succ c_4 \succ c_6 \succ c_5\), \(c_1 \succ c_2 \succ c_4 \succ c_3 \succ c_5 \succ c_6\) and \(c_1 \succ c_2 \succ c_4 \succ c_3 \succ c_6 \succ c_5\).
\end{example}

\begin{figure}[t] 
   \centering
    \scalebox{1}{\begin{tikzpicture}[->,>=stealth',shorten >=1pt,auto,node distance=5cm,semithick]

  \node[draw=none,text=black] (B1)   at (0,0)                 {\large $B_1 = \{c_1\}$};
    \node[draw=none,text=black] (B2)   at (4,0)                 {\large $B_2 = \{c_2\}$};
  \node[draw=none,text=black] (B3)   at (8,0)                 {\large $B_3 = \{c_3,c_4\}$};
  \node[draw=none,text=black] (B4)   at (12,0)                 {\large $B_4 = \{c_5,c_6\}$};

  \path (B1) edge  node {} (B2)
        (B1) edge[bend right=15]  node {} (B3)
        (B1) edge[bend right=10]  node {} (B4)
        
        (B2) edge  node {} (B3)
        (B2) edge[bend left=15]  node {} (B4)
    
        (B3) edge  node {} (B4);
\end{tikzpicture}}
\caption{The meta-graph of SCCs of \(\EMG{3}_{\PP}\) in Example \ref{ex:MG}.}
    \label{Fig:SCCs}
\end{figure}

To take advantage of  Theorem~\ref{prop:SCCs}, one could try 1) to index the SCCs of \(\EMG{k}_{\PP}\) according to a linear order \(<_{\EMG{k}_{\PP}} \in \TopSortSet{\EMG{k}_{\PP}}\), and then 2) to work on each SCC separately, before concatenating the obtained rankings. 
However, for a consensus ranking consistent with \(<_{\EMG{k}_{\PP}}\), the relative positions of candidates in \(B_i(\EMG{k}_{\PP})\) depend on the set of candidates in \(B_{> i}(\EMG{k}_{\PP})\!\coloneqq\! B_{i+1}(\EMG{k}_{\PP}) \cup \ldots \cup B_{\sigma(\EMG{k}_{\PP})}(\EMG{k}_{\PP})\) (but not on their order). The influence of \(B_{> i}(\EMG{k}_{\PP})\) can be captured in the dynamic programming procedure by applying a modified version of Equation~\ref{eq:progdyn} separately for each subset \(B_{\sigma(\EMG{k}_{\PP})}(\EMG{k}_{\PP})\) downto
\(B_1(\EMG{k}_{\PP})\).
Formally, if \(r^*\) is optimal for \(k\)-KAP, then: 
\begin{equation*}
\EKendallTau{k}(r^*,\PP)\!=\!\sum_{i=1}^{\sigma(\EMG{k}_{\PP})} d_{\mathtt{KT}}^k(B_i(\EMG{k}_{\PP}))
\end{equation*}
where, for any subset \(S \subseteq B_i(\EMG{k}_{\PP})\), \(d_{\mathtt{KT}}^k(S)\) is defined by $d_{\mathtt{KT}}^k(\emptyset) = 0$ and (\(B_{> i}\) stands for \(B_{> i}(\EMG{k}_{\PP})\)): 
\begin{align*}
    d_{\mathtt{KT}}^k(S)& = \min_{c\in S}  [d_{\mathtt{KT}}^k(S\!\setminus \!\{c\})\\ & +\!\!\!\sum_{r \in \PPr{S\cup B_{> i}}} \sum_{c'\succ_{r} c} \sum_{i=0}^{k-2}  \binom{|S|+|B_{> i}|-\rk(c',r)-1}{i}].
\end{align*}
It amounts to replacing \(S\) by \(S\!\cup\! B_{>i}\) in the second summand of Equation~\ref{eq:progdyn} to take into account the existence of a consensus ranking where all the candidates of \(B_{>i}\) are ranked after those of \(B_{i}\). Let \(r^*_i\) be a ranking of \(B_i(\EMG{k}_{\PP})\) such that \(\EKendallTau{k}(r_{\ge i}^*,\PP_{B_{\ge i}(\EMG{k}_{\PP})})\!=\! d_{\mathtt{KT}}^k(B_i(\EMG{k}_{\PP}))\!+\!\ldots\!+\!d_{\mathtt{KT}}^k(B_{\sigma(\EMG{k})}(\EMG{k}_{\PP}))\), where \(r_{\ge i}^*\) is the ranking obtained by the concatenation of rankings \(r^*_i,\ldots,r^*_{\sigma(\EMG{k})}\) in this order. The ranking \(r^*_{\ge 1}\) of \(C\) is a consensus ranking w.r.t. the $k$-wise Kemeny rule. 
Given Theorem \ref{prop:SCCs}, the \(k\)-wise majority digraph thus seems promising to boost the computation of a consensus ranking. Unfortunately, the following negative result holds: 

\HG{
\begin{theorem} \label{thrm:MajorityHardness}
Given two candidates \(c\) and \(c'\) in a profile \(\PP\), determining if \(\displaystyle\max_{S\in \SubsetSet{m}_{cc'}} w^k_{\PP}(S,c,c') > 0\) is an NP-complete problem for any constant \(k\!\geq\!4\). 
\end{theorem}
}
\begin{proof}
\HG{For the membership part, note that given a set $S$, we can check in polynomial time if \(w^k_{\PP}(S,c,c') > 0\). Indeed, we can enumerate all sets in $\SubsetSet{k}_{cc'}(S)$ and for each ranking $ r \in \PP$ count how many are in $\SubsetSet{k}_{r}(S,c,c')$ or in  $\SubsetSet{k}_{r}(S,c',c)$.}

\HG{For the hardness part,} we make a reduction from the set cover problem, known to be NP-complete~\cite{karp1972reducibility}:\\ [1.5ex]
{\sc Set Cover Problem}\\
\emph{Instance:} A set of elements \(\mathcal{X} = \{x_1, \ldots, x_p\}\), a collection \(\mathcal{T} = \{T_1,\ldots, T_q\}\) of sets of elements of \(\mathcal{X}\), \HG{and a positive integer $b$}.\\
\emph{Question:} Does there exist a subcollection \(\mathcal{K}\subseteq \mathcal{T}\) of at most \(b\) sets that covers \(\mathcal{X}\) (i.e., such that \(\bigcup_{T \in \mathcal{K}} T\!=\!\mathcal{X}\))? \\ [1.5ex]
We assume that no set in \( \mathcal{T} \) contains \(\mathcal{X}\) as otherwise the problem is trivial. Furthermore, we assume that no element in \(\mathcal{X}\) is contained in all sets of \( \mathcal{T} \) as otherwise this element could be discarded from the instance as any solution would cover this element. We now detail the preference profile that we create from an instance of the set cover problem.

\medskip

\textit{Set of candidates:} We will create a profile such that \(\max_{S\in \SubsetSet{m}_{cc'}} w^k_{\PP}(S,c,c')>0\) iff the answer to the set cover problem is yes for the instance under consideration. More precisely, we will show that if \(\mathcal{X}\) cannot be covered by a subcollection \(\mathcal{K}\subseteq \mathcal{T}\) with less than \(b\) sets, then \(w^k_{\mathcal{P}}(S,c,c') < 0\) for all \(S\) in \(\SubsetSet{m}_{cc'}\). 
Otherwise, if set \(\mathcal{X}\) can be covered with a subcollection \(\mathcal{K}\subseteq \mathcal{T}\) with less than \(b\) sets, then there exists a set \(S\) in \(\SubsetSet{m}_{cc'}\) such that \(w^k_{\mathcal{P}}(S,c,c') > 0\). 
In addition to candidates \(c\) and \(c'\), for each pair \HG{\((x,T) \in \mathcal{X}\times \mathcal{T} \)} such that \HG{\( x \in T \)} we create a candidate \HG{\(c_{x,T}\)}. Moreover, for each set \HG{\(T \in \mathcal{T} \)} we create a candidate \HG{\(c_T\)}. 
In the sequel, we may call candidates \HG{\(c_{x,T}\)} \emph{element candidates} and candidates \HG{\(c_T\)} \emph{set candidates}. This process yields at most \(pq + 2\) candidates (\((p-1)q + q + 2 \)). 
The candidates \HG{\(c_{x,T}\)} and \HG{\(c_{T}\)} will make the correspondence with the subcollection \(\mathcal{K}\): the candidate \HG{\(c_{T}\)} will be in the set \(S\) iff \(\HG{T} \in \mathcal{K}\)  and the candidate \HG{\(c_{x,T}\)} will be in the set \(S\) if \HG{\(T\)} is added to \(\mathcal{K}\) in order to cover \HG{\(x\)}.

\medskip

\textit{Set of voters:} For each pair \HG{\( (x,T) \in \mathcal{X}\times \mathcal{T} \)} such that \HG{\( x \in T \)}, we create \(2b\) voters \HG{\(v_{x,T}^s\)} (\( s \in \{1,\ldots, 2b\} \)) with the same ranking \HG{\(r_{x,T}\)} such that \HG{\( \mathtt{tail}_4(r_{x,T}) =  c \succ c_{x,T} \succ c_T \succ c'\).}  
For each element \HG{\(x \in \mathcal{X}\)}, we create \(2b\) voters  \HG{\( v_{x}^s \)} (\(s\in \{1,\ldots, 2b\}\))  with the same ranking \HG{\(r_{x}\)} such that \( \HG{\mathtt{tail}_{u+2}(r_{x}) = c' \succ c_{x, T_{i_1}} \succ \ldots \succ c_{x, T_{i_u}} \succ c }\) where \HG{\(T_{i_1}, T_{i_2}, \ldots, T_{i_u}\)} are the different sets in \HG{$\mathcal{T}$} that contain \(\HG{x}\).   
For each set \HG{\(T\in \mathcal{T}\)} we create \HG{\(2b|T|+2\)} voters \( \HG{v_{T}^s}\) (\(s\in \{1,\ldots, \HG{2b|T|+2}\}\)) with the same ranking \(\HG{r_{T}}\) such that \HG{\( \mathtt{tail}_{3}(r_{T}) = c' \succ c_{T} \succ c\)}. 
Lastly, we create \(2q+1+2b\) voters \(v^s\) (\(s\in \{1,\ldots, 2q+1+2b\}\)) with the same ranking \(r\)  such that \( \mathtt{tail}_{2}(r) = c \succ c' \). In the end, we obtain \(O(bpq)\) voters. Note that \(\mathcal{P}\) can be build with no unanimity dominance relationship between two candidates.

\medskip

We now show that, given \(k\ge 4\), we have \(\max_{S\in \SubsetSet{m}_{cc'}} w^k_{\PP}(S,c,c') > 0\) 
iff the answer to the set cover problem is yes.

\medskip

First note that when \(S = \{c, c'\}\), we have the following values for \(\phi^k_{\mathcal{P}}(S,c,c')\) and \(\phi^k_{\mathcal{P}}(S, c', c)\):
\begin{align*}
\phi^k_{\mathcal{P}}(\{c, c'\},c,c') &= \HG{2b \sum_{T\in \mathcal{T}} |T| + 2q + 2b +1}\\
\phi^k_{\mathcal{P}}(\{c, c'\},c',c) &= \HG{2b \sum_{T\in \mathcal{T}} |T| + 2q + 2bp}.
\end{align*}
Hence, 
\begin{align*}
w^k_{\mathcal{P}}(\{c, c'\},c,c') &=  \phi^k_{\mathcal{P}}(\{c, c'\},c,c') - \phi^k_{\mathcal{P}}(\{c, c'\},c',c)\\
&= 2b(1-p) + 1 < 0,    
\end{align*}
as \(p\ge 2\) and \(b\ge 1\). 

Now let us look at how adding an element to \(S\) starting from \(S = \{c,c'\}\) modifies values \(\phi^k_{\mathcal{P}}(S,c,c')\) and \(\phi^k_{\mathcal{P}}(S,c',c)\).  
We will assume without loss of generality that we add set candidates before element candidates. 
\begin{itemize}
\item If we assume \(S\) is only composed of \(c\), \(c'\) and set candidates, then adding a candidate \HG{\(c_T\)} to it results in adding \HG{\(2b|T|\)} to \(\phi^k_{\mathcal{P}}(S,c,c')\) and \HG{\(2b|T| + 2\)} to \(\phi^k_{\mathcal{P}}(S,c',c)\).
\item If we add a candidate \HG{\(c_{x,T}\)} to \(S\), then we add \(2b\) to \(\phi^k_{\mathcal{P}}(S,c,c')\) if \(\HG{c_T}\not\in S\) and \(4b\) otherwise. Additionally, we add \(2b\) to \(\phi^k_{\mathcal{P}}(S,c',c)\) if there is no other \HG{\(c_{x,T'} \in S\)}, otherwise we add to it something that is greater than or equal to \(4b\). Note that we have used the fact that \(k\ge 4\), as we would only add \(2b\) to both values if \(k\) was equal to 3.
\end{itemize}
From these observations, we can derive the following rules:
\begin{enumerate}
\item If \HG{\(\{c_{x,T},c_{x,T'}\} \subset S\)} with \HG{\(T \neq T'\)}, then \(w^k_{\mathcal{P}}(S,c,c') \le \HG{w^k_{\mathcal{P}}(S\setminus\{c_{x,T'}\},c,c')}\). 
\item If \(\HG{c_{x,T}\in S}\) and \(\HG{c_T\not \in S}\), then 
\(w^k_{\mathcal{P}}(S,c,c') \le  w^k_{\mathcal{P}}(S\setminus\{\HG{c_{x,T}}\},c,c')\). 
\item If \(\HG{c_{T}}\in S\) and \(\HG{\forall x\in T}, c_{x,T}\not \in S\), then \(w^k_{\mathcal{P}}(S,c,c')\) \(\le  w^k_{\mathcal{P}}(S\setminus\{\HG{c_{T}}\},c,c')\).  
\end{enumerate}
Stated differently, while rule 1 states that, to maximize \(w^k_{\mathcal{P}}(S,c,c')\), we should keep no more than one candidate \(\HG{c_{x,T}}\) per element \(\HG{x}\), rules 2 and 3 state that there should be a candidate \(\HG{c_{x,T}}\) in \(S\) iff there should also be candidate \(\HG{c_T}\).\\

Let us now consider a set \(S\in \SubsetSet{m}_{cc'}\) verifying rules 1, 2 and 3 that includes one candidate \HG{\(c_{x,T}\)} for \(s\) elements \(\{x_{i_1},\ldots,\) \( x_{i_s}\}\) and \(v\) set candidates. Then:
\begin{align*}
w^k_{\mathcal{P}}(S,c,c') &=  2b(1-p) + 1 - 2v + 2bs\\
                                 &=  2b(s-p) + 2(b-v) +1. 
\end{align*} 
Note that if \(S\neq \{c,c'\}\), then \(v\ge 1\) and that by rule 1, \(s\le p\). If \(s < p\), then \( w^k_{\mathcal{P}}(S,c,c') \le  -2b + 2(b-v) +1 = 1- 2v < 0 \). Hence, if \(w^k_{\mathcal{P}}(S,c,c')\ge 0\) then \(s = p\), which further implies that \(v \le b\). In this case, it is easy to see that the \(v\) sets corresponding to the set candidates in \(S\) form a valid set cover of \(\mathcal{X}\) as \(\{x_{i_1},\ldots, x_{i_s}\} = \mathcal{X}\), \(v \le b\) and \(S\) verifies rule 2. To summarize, making the assumption that \(w^k_{\mathcal{P}}(S,c,c')\ge 0\), (\HG{In fact in this case \(w^k_{\mathcal{P}}(S,c,c') > 0\)}) we have showed that we could build a valid set cover of \(\mathcal{X}\) from \(S\). Consequently, this implies that if the set cover instance admits no valid set cover, then  \(\max_{S\in \SubsetSet{m}_{cc'}} w^k_{\PP}(S,c,c') < 0\).\\

Conversely, let us assume that there exists a subcollection \(\HG{\mathcal{K}} = \{ T_{i_1},\) \(\ldots , T_{i_v} \}\) with \(v \le b\) sets that covers \(\mathcal{X}\). We consider a set \(S\) such that \HG{\(\{c_{T_{i_1}},\ldots,c_{T_{i_v}},c,c'\} \subset S\)} and such that for each \HG{\(x \in\mathcal{X}\)}, \(S\) contains exactly one candidate \HG{\(c_{x,T}\)} where \HG{\(x \in T\)} and \(T \in \mathcal{K}\). Then, simple calcula show that:
 \begin{align*}
 \phi^k_{\mathcal{P}}(S,c,c') &= 2b\HG{\sum_{T\in \mathcal{T}} |T|} + 2b\HG{\sum_{T\in \mathcal{K}} |T|} + 4bp + 2q + 1 + 2b,\\  
 \phi^k_{\mathcal{P}}(S,c',c) &= 4bp + 2b\HG{\sum_{T\in \mathcal{T}} |T|} + 2q + 2b\HG{\sum_{T\in \mathcal{K}} |T|} + 2v  .
\end{align*}
Hence, \(w^k_{\mathcal{P}}(S,c,c') \ge 1\) and \(\max_{S\in \SubsetSet{m}_{cc'}} w^k_{\PP}(S,c,c') > 0\).  
\end{proof}

Hence, computing \(\EMG{k}_{\PP}\) from \(\PP\) is NP-hard for \(k\!\geq\!4\). In contrast, \(\EMG{3}_{\PP}\) can be computed in polynomial time. Indeed, given a set \(S\!\subset\!C\) such that \(\{c,c'\}\!\subseteq\!S\), adding an element \(x\!\not\in\!S\) to \(S\) increases \(\phi^3_{\PP}(S,c,c')\) by one for each \(r\!\in \!\PP\) such that \(c\!\succ_{r}\!c'\) and \(c\!\succ_{r}\!x\). Let us denote by \(\PP_{c\succ c'}\) the set \(\{r\!\in\!\PP \mbox{ s.t. } c\!\succ_{r}\!c'\}\). A set $S^*$ maximizing \(w^3_{\PP}(S,c,c')\) is  \(S^*\!\coloneqq\!\{c,c'\}\cup\{x\in C \mbox{ such that } |\PP_{c\succ c'}\cap\PP_{c\succ x}|\!>\!|\PP_{c'\succ c}\cap\PP_{c'\succ x}|\}\).
\HG{\label{explanation:graph}
For instance, coming back to the weighted digraph \(\EMG{3}_{\PP}\) shown on the right of Figure~\ref{fig:extendedMajorityGraph}, a set \(S^*\) maximizing \(w^3_{\PP}(S,c_4,c_3)\) in Example~\ref{ex:MG}  
is \(\{c_3,c_4,c_5\}\), which yields \(w^3_{\PP}(S^*,c_4,c_3) = 4\) as reported previously.}\\

Note that one can take advantage of the meta-graph of SCCs to trim the graph \(\EMG{k}_{\PP}\) if one looks for a consensus ranking \(r^*\) consistent with a specific order \(<_{\EMG{k}_{\PP}} \in \TopSortSet{\EMG{k}_{\PP}}\). 
It may indeed happen that, for an edge \((c,c')\), the weight \(w^k_{\PP}(c,c')\!=\!w^k_{\PP}(S,c,c')\!>\!0\) corresponds to a set \(S\) which contains candidates that will never be below \(c\) in \(r^*\). Conversely, the set \(S\) may omit candidates that are necessarily below \(c\) in \(r^*\). These constraints can be induced by either unanimity dominance relations or by  \(<_{\EMG{k}_{\PP}}\). The following example illustrates this idea. 

\begin{example}
Let us refine the digraph \(\EMG{3}_{\PP}\) previously obtained for the profile \(\PP\) of Example \ref{ex:MG}. The SCCs are \(B_1 = \{c_1\}\), \(B_2 = \{c_2\}\), \(B_3 = \{c_3, c_4\}\) and \(B_4 = \{c_5,c_6\}\) and \(\TopSortSet{\EMG{k}_{\PP}} = \{<_{\EMG{k}_{\PP}}\}\), where \(1 <_{\EMG{k}_{\PP}} 2 <_{\EMG{k}_{\PP}} 3 <_{\EMG{k}_{\PP}} 4\). 
A set maximizing \(w^3_{\PP}(S,c_3,c_4)\) is \(S\!=\!\{c_2,c_3,c_4\}\). This set contains \(c_2\) while it is necessarily above \(c_3\) in a consistent ranking. Conversely, candidates \(c_5\) and \(c_6\) are omitted while they are necessarily below \(c_3\). By taking into account these constraints, we obtain that a set maximizing \(w^3_{\PP}(S,c_3,c_4)\) is  \(S\!=\!\{c_3,c_4,c_5,c_6\}\), for which \(w^3_{\PP}(S,c_3,c_4)\!=\!-4\). Hence, we can remove the arc \((c_3,c_4)\) from \(\EMG{3}_{\PP}\). Similarly, it is  possible to show that the arc \((c_6,c_5)\) can be removed from \(\EMG{3}_{\PP}\). Thanks to these refinement steps, we can conclude that a consensus ranking is \(r^* = c_1 \succ c_2 \succ c_4 \succ c_3 \succ c_5 \succ c_6\).
\end{example}

\color{black}
\section{A polynomial time 2-approximation algorithm}\label{sec:approx}

In this section, we provide a polynomial time 2-approximation algorithm for problem \(k\)-KAP, in the same spirit as the 2-approximation algorithm by \citeauthor{dwork2001rank} \cite{dwork2001rank} for the Kemeny aggregation problem. 
\HG{
This latter algorithm relies on the Spearman distance.
\begin{definition}
The Spearman distance $\delta_S(r,r')$ between two rankings $r$ and $r'$ is defined by 
$$\delta_S(r,r') =  \sum_{c \in C} |\rk(c,r)-\rk(c,r')|.$$
\end{definition}
}
The \emph{Diaconis-Graham inequality} \cite{diaconis1977spearman} \HG{states that the Kendall tau and Spearman distances} remain within a constant factor of each other
for all pairs of rankings, namely:
\[
\KendallTau(r,r') \le \Spearman(r,r') \le 2\KendallTau(r,r').
\]
Note that the right bound is tight, as witnessed by the two rankings $c_1 \succ_r c_2$ and $c_2 \succ_{r'} c_1$, for which \(\KendallTau(r,r')\!=\!1\) and \(\Spearman(r,r')\!=\!2\).

\citeauthor{dwork2001rank} \cite{dwork2001rank} have shown that a ranking \(r_S\) minimizing the sum of Spearman distances to the rankings in a preference profile \(\PP\) (i.e.,  $\sum_{r' \in \PP} \Spearman(r_S,r')\!=\!\min_r \sum_{r' \in \PP} \Spearman(r,r')$) can be computed in polynomial time via a minimum cost matching algorithm. As a consequence of the Diaconis-Graham inequality, the sum of Kendall tau distances between \(r_S\) and the rankings in \(\PP\) is in the worst case twice that of an optimal ranking for the Kemeny rule.
Denoting by $r_{\mathtt{KT}}$ an optimal ranking for the Kemeny rule on $\PP$, we have indeed:
\[
\sum_{r' \in \PP} \KendallTau(r_{\mathtt{S}},r') \le \sum_{r' \in \PP} \Spearman(r_{\mathtt{S}},r') \le \sum_{r' \in \PP} \Spearman(r_{\mathtt{KT}},r') \le 2\sum_{r' \in \PP} \KendallTau(r_{\mathtt{KT}},r').
\]

In order to obtain the same kind of result for the $k$-wise Kendall tau distance, we introduce \HG{a $k$-wise variant of the Spearman distance.
\begin{definition}
The $k$-wise Spearman distance $\ESpearman{k}(r,r')$ between two rankings $r$ and $r'$ is defined by:
\[
\ESpearman{k}(r,r')=\sum_{c\in C} \textstyle\sum_{i=\min\{\rk(c,r),\rk(c,r')\}}^{\max\{\rk(c,r),\rk(c,r')\}-1}N^{m-i-1}_{k-2}
\]
where $N^p_k\!=\!\sum_{i=0}^{k} \binom{p}{i}$ is the number of subsets of size less than or equal to $k$ \HG{within} a set of size $p$. 
\end{definition}
This definition is motivated by the fact that} $N^{m-i-1}_{k-2}$ corresponds to the $k$-wise Kendall tau distances between a ranking and the ranking obtained by swapping candidates of ranks $i$ and $i+1$. Note that $\ESpearman{2}\!=\!\Spearman$. We have indeed: \[
\begin{array}{rl}
\sum_{i=\min\{\rk(c,r),\rk(c,r')\}}^{\max\{\rk(c,r),\rk(c,r')\}-1}N^{m-i-1}_0
= & \max\{\rk(c,r),\rk(c,r')\}-\min\{\rk(c,r),\rk(c,r')\}\\
= & |\rk(c,r)\!-\!\rk(c,r')|.
\end{array}
\]
because $N^{m-i-1}_0\!=\!1$ for all $i$ in the summation. 

\HG{We now state and prove the extension of the Diaconis-Graham inequality to our $k$-wise variants: }
\begin{lemma} \label{lem:EDiaconisGraham}
For any two rankings $r$ and $r'$: $\EKendallTau{k}(r,r')\!\le\! \ESpearman{k}(r,r')\!\le\! 2\EKendallTau{k}(r,r')$.
\end{lemma}

\begin{proof}
We first prove that $\EKendallTau{k}(r,r')\!\le\! \ESpearman{k}(r,r')$. We recall that:
\[ 
\EKendallTau{k}(r,r')=|\{S \in \SubsetSet{k}:\tp_r(S)\neq \tp_{r'}(S)\}|
\]
which can be reformulated:
\[
\EKendallTau{k}(r,r')=\sum_{c \in C} |\{S \in \SubsetSet{k}:\tp_r(S) = c \mbox{ and } \tp_{r'}(S)\neq c\}|.
\]
Let us denote by $\SubsetSet{k}_c(r,r')$ the set $\{S \in \SubsetSet{k}:\tp_r(S)=c \mbox{ and } \tp_{r'}(S)\neq c\}$. For any $S\!\in\!\SubsetSet{k}_c(r,r')$, one of the followings holds true:
\begin{itemize}
\setlength\itemsep{0.1em}
    \item $\rk(\tp_r(S),r)\!\le\! \rk(\tp_{r'}(S),r')\!<\! \rk(\tp_r(S),r')$,
    \item or $\rk(\tp_{r'}(S),r)\!>\! \rk(\tp_{r}(S),r)\!\ge\! \rk(\tp_{r'}(S),r')$,
    \item or both  (if $\rk(\tp_r(S),r)\!=\!\rk(\tp_{r'}(S),r')$).
\end{itemize}
\[
\begin{aligned}
\mbox{Consequently: } \EKendallTau{k}(r,r')&\le
\displaystyle\sum_{c \in C} |\{S \!\in\!\SubsetSet{k}_c(r,r'):\rk(c,r)\!\le\! \rk(\tp_{r'}(S),r')\!<\! \rk(c,r')\}|\\
&+\displaystyle\sum_{c \in C} |\{S \!\in\!\SubsetSet{k}_c(r,r'):\rk(\tp_{r'}(S),r)\!>\! \rk(c,r)\!\ge\! \rk(\tp_{r'}(S),r')\}|
\end{aligned}
\]
because the union of the two sets is $\SubsetSet{k}_c(r,r')$ and $|A\cup B|\le |A|+|B|$ for any two sets $A$ and $B$.

On the one hand, we have:
\[
\begin{aligned}
& \displaystyle\sum_{c \in C} |\{S \!\in\!\SubsetSet{k}_c(r,r'):\rk(c,r)\!\le\! \rk(\tp_{r'}(S),r')\!<\! \rk(c,r')\}|\\ \le & \displaystyle\sum_{\substack{c \in C \\ \rk(c,r) < \rk(c,r')}} \textstyle\sum_{i=\rk(c,r)}^{\rk(c,r')-1}N_{k-2}^{m-i-1}.
\end{aligned}
\]
The upper bound results from the fact that each candidate $c'$ such that $\rk(c,r)\!\le\! \rk(c',r')\!<\! \rk(c,r')$ belongs to at most $N_{k-2}^{m-\rk(c',r')-1}$ subsets $S\!\in\! \SubsetSet{k}_c(r,r')$ such that $\tp_{r'}(S)\!=\!c'$.

On the other hand, we have:
\[
\begin{aligned}
& \displaystyle\sum_{c \in C} |\{S \!\in\!\SubsetSet{k}_c(r,r'):\rk(\tp_{r'}(S),r)\!>\! \rk(c,r)\!\ge\! \rk(\tp_{r'}(S),r')\}| \\
= & \displaystyle\sum_{c' \in C} |\{S \!\in\!\SubsetSet{k}_{c'}(r',r):\rk(c',r)\!>\! \rk(\tp_r(S),r)\!\ge\! \rk(c',r')\}|\\
\le & \displaystyle\sum_{\substack{c' \in C \\ \rk(c',r) > \rk(c',r')}} \textstyle\sum_{i=\rk(c',r')}^{\rk(c',r)-1}N_{k-2}^{m-i-1}\!=\!\displaystyle\sum_{\substack{c \in C \\ \rk(c,r) > \rk(c,r')}} \textstyle\sum_{i=\rk(c,r')}^{\rk(c,r)-1}N_{k-2}^{m-i-1}.
\end{aligned}
\]
The first equality results from the fact that varying $c$ and considering subsets $S$ such that $\tp_{r}(S)\!=\!c$ and $\rk(\tp_{r'}(S),r)>\rk(c,r)\ge\rk(\tp_{r'}(S),r')$ is equivalent to varying $c'$ and considering subsets $S$ such that $\tp_{r'}(S)\!=\!c'$ and $\rk(c',r)>\rk(\tp_{r}(S),r)\ge\rk(c',r')$.

Hence:
\[
\begin{aligned}
\EKendallTau{k}(r,r') & \le
\displaystyle\sum_{\substack{c \in C \\ \rk(c,r) < \rk(c,r')}} \textstyle\sum_{i=\rk(c,r)}^{\rk(c,r')-1}N_{k-2}^{m-i-1}+\displaystyle\sum_{\substack{c \in C \\ \rk(c,r) > \rk(c,r')}} \textstyle\sum_{i=\rk(c,r')}^{\rk(c,r)-1}N_{k-2}^{m-i-1}\\
& = \sum_{c\in C} \textstyle\sum_{i=\min\{\rk(c,r),\rk(c,r')\}}^{\max\{\rk(c,r),\rk(c,r')\}-1}N^{m-i-1}_{k-2} = \ESpearman{k}(r,r').
\end{aligned}
\]

\medskip

We now prove that $\ESpearman{k}(r,r')\!\le\! 2\EKendallTau{k}(r,r')$. In this purpose, we consider a sequence $r_0\!=\!r,r_1,\ldots,r_\delta\!=\!r'$ of rankings, where $r_{j+1}$ is obtained from $r_{j}$ by swapping in $r_j$ the candidate $c_j'\!=\!\tp_{r'}(\{c \in C: \rk(c,r_j)\neq\rk(c,r')\})$ with the previous candidate in $r_j$, denoted by $c_j$. 
This is like doing an in-place selection sort w.r.t. the order defined by $r'$, moving $c_j'$ to its place in $r'$ by successive swaps. 
Note the following things: 
\begin{itemize}
    \item at each step $\rk(c_j',r')\!<\! \rk(c_j',r_j)$ by definition of $c_j'$;
    \item $\delta\!=\!\KendallTau(r,r')$ because each swap of consecutive candidates in $r_j$ decreases by exactly one the number of pairwise disagreements between $r_j$ and $r'$ ($c_j'\succ_{r'} c_j$ by definition of $c_j'$).
\end{itemize}

The proof is in two steps:
\begin{enumerate}
    \item[$(i)$] We show that $\EKendallTau{k}(r_j,r')=\EKendallTau{k}(r_j,r_{j+1})+\EKendallTau{k}(r_{j+1},r')$.  By induction, an immediate consequence is that: 
    \[
    \EKendallTau{k}(r,r')=\sum_{j=0}^{\delta-1} \EKendallTau{k}(r_j,r_{j+1}).
    \]
    \item[$(ii)$] We define $D_j\!=\!\ESpearman{k}(r_j,r')$. We show that $D_j\!-\!D_{j+1} \!\le\! 2\EKendallTau{k}(r_j,r_{j+1})$, which implies: $\ESpearman{k}(r,r')\!=\!\sum_{j=0}^{\delta-1} (D_j-D_{j+1}) \!\le\!2\sum_{j=0}^{\delta-1} \EKendallTau{k}(r_j,r_{j+1})\!=\!2\EKendallTau{k}(r,r').$
\end{enumerate}    
    
    \noindent \emph{Proof of} $(i)$. 
    We have the following sequence of equalities:
    \[
    \begin{aligned}
    &\EKendallTau{k}(r_j,r')-\EKendallTau{k}(r_{j+1},r')\\
    &=|\{S \in \SubsetSet{k}:\tp_{r_j}(S)\neq \tp_{r'}(S)\}|-|\{S \in \SubsetSet{k}:\tp_{r_{j+1}}(S)\neq \tp_{r'}(S)\}|\\
    &=|\{S \in \SubsetSet{k}:\tp_{r_j}(S)=c_j \mbox{ and } \tp_{r'}(S)=c_{j}'\}|\\
    &\mbox{(because $r_j$ and $r_{j+1}$ only differ in the order of $c_j$ and $c_j'$)}\\
    &=|\{S \in \SubsetSet{k}:\{c_j,c_j'\}\subseteq S \mbox{ and } S\setminus\{c_j,c_j'\}\subseteq \bel_{c_j'}(r_j)\}|\\
    &\mbox{(because $c_j'\succ_{r'} c$, $\forall c \in \bel_{c_j'}(r_j)$, which implies that $\bel_{c_j}(r_j)\subseteq \bel_{c_j'}(r')$)}\\
    &=|\{S \in \SubsetSet{k}:\tp_{r_j}(S)\neq \tp_{r_{j+1}}(S)\}|\\
    &\mbox{(because $r_j$ and $r_{j+1}$ only differ in the order of $c_j$ and $c_j'$)}\\
    &=\EKendallTau{k}(r_j,r_{j+1}).
    \end{aligned}
    \]
    The result follows.
    
    \medskip
    
    \noindent \emph{Proof of} $(ii)$. 
    As $r_j$ and $r_{j+1}$ only differ in the ranks of $c_j$ and $c_j'$, the difference $D_j-D_{j+1}$ is equal to:
    \[
    \sum_{c\in\{c_j,c_j'\}}\left(\textstyle\sum_{i=\min\{\rk(c,r_j),\rk(c,r')\}}^{\max\{\rk(c,r_j),\rk(c,r')\}-1} N_{k-2}^{m-i-1} - \textstyle\sum_{i=\min\{\rk(c,r_{j+1}),\rk(c,r')\}}^{\max\{\rk(c,r_{j+1}),\rk(c,r')\}-1}N_{k-2}^{m-i-1}\right).
    \]
    
    Note that $\rk(c_j,r_{j+1}) = \rk(c_j,r_{j})+1$ and $\rk(c_j',r_{j+1}) = \rk(c_j',r_{j})-1$ and $\rk(c_j',r') \le \rk(c_j',r_{j+1})$. 
    By swapping $c_j$ and $c_j'$ in $r_j$, the $k$-wise Spearman distance to $r'$ will thus decrease with respect to $c_j'$ by $N_{k-2}^{m-\rk(c_j',r_{j})}$. Regarding $c_j$, it may increase by $N_{k-2}^{m-\rk(c_j,r_{j})-1}$ if $\max\{\rk(c_j,r_j),\rk(c_j,r')\} \!=\!\rk(c_j,r_j)$, or decrease by $N_{k-2}^{m-\rk(c_j,r_{j})-1}$ otherwise. 
    Hence, the largest possible decrease in $\ESpearman{k}$ occurs in this latter case. 
    We derive from this short analysis:
    \begin{align*}
        D_j - D_{j+1} &\le N_{k-2}^{m-\rk(c_j',r_{j})} + N_{k-2}^{m-\rk(c_j,r_{j})-1} \\
                      &= 2 N_{k-2}^{m-\rk(c_j',r_{j})} \text{ (as $\rk(c_j',r_{j}) = \rk(c_j,r_{j})+1$)}\\
                      &= 2 \EKendallTau{k}(r_j,r_{j+1})
    \end{align*}
    The last line results from the fact that, as stated previously, $N^{m-i-1}_{k-2}$ corresponds to the $k$-wise Kendall tau distance between a ranking and the ranking obtained by swapping candidates of ranks $i$ and $i+1$. 
\end{proof}

The following result states that a consensus ranking for the \(k\)-wise Spearman distance can be computed in polynomial time in the numbers \(n\) of voters and \(m\) of candidates:

\begin{lemma} \label{lem:Spearman}
A consensus ranking for the \(k\)-wise Spearman distance can be computed in time \(O(nm^3)\), for a preference profile with \(n\) voters and \(m\) candidates.
\end{lemma}

\begin{proof}
Consider the complete bipartite graph with two independent sets $V_1$ and $V_2$, where:
\begin{itemize}
\setlength\itemsep{0.1em}
    \item each vertex in $V_1$ corresponds to a candidate in $C$,
    \item each vertex in $V_2$ corresponds to a position in $\{1,\ldots,m\}$.
\end{itemize}
The edge \(\{c,p\}\) between $c\!\in\!V_1$ and $p\!\in\!V_2$ is weighted by \(
\sum_{r' \in \PP} \textstyle\sum_{i=\min\{p,\rk(c,r')\}}^{\max\{p,\rk(c,r')\}-1}N^{m-i-1}_{k-2}
\).
Each perfect matching corresponds to a ranking \(r\), where \(\rk(c,r)\!=\!p\) if edge \(\{c,p\}\) belongs to the matching. Determining a minimum weight matching in this graph thus amounts to solve the following minimization problem:
\[
\min_r \sum_{c \in C} \sum_{r' \in \PP} \textstyle\sum_{i=\min\{\rk(c,r),\rk(c,r')\}}^{\max\{\rk(c,r),\rk(c,r')\}-1}N^{m-i-1}_{k-2}.
\]
The two first sum operators can be swapped, which yields:
\[
\min_r \sum_{r' \in \PP} \sum_{c \in C} \textstyle\sum_{i=\min\{\rk(c,r),\rk(c,r')\}}^{\max\{\rk(c,r),\rk(c,r')\}-1}N^{m-i-1}_{k-2} = \min_r \sum_{r' \in \PP} \ESpearman{k}(r,r').
\]
Thus a minimum weight matching corresponds to a consensus ranking for the $k$-wise Spearman distance. Such a matching can be computed in time \(O(|V_1|^3)\) by the Hungarian algorithm, thus in \(O(m^3)\) as \(|V_1|\!=\!m\). The computation of the complete bipartite graph itself is performed in \(O(nm^3)\) (time required for computing the weights of edges). The overall complexity is therefore \(O(nm^3)\).
\end{proof}

From Lemma~\ref{lem:EDiaconisGraham}, which implies that any consensus ranking for the \(k\)-wise Spearman distance is a 2-approximation of an optimal consensus ranking for the \(k\)-wise Kemeny rule, and Lemma~\ref{lem:Spearman}, which establishes that a consensus ranking for the \(k\)-wise Spearman distance can be computed in \(O(nm^3)\), we deduce the main result of this section:
\begin{theorem}
\HG{There exists a 2-approximation algorithm for \(k\)-KAP which runs in \(O(nm^3)\) time.}
\end{theorem}

\revtwo{Note that determining whether the bound 2 is tight remains an open problem (as is the case for $k\!=\!2$, to our knowledge).}

\color{black}

\section{Numerical Tests\label{sec:Numerical}}
The numerical tests\footnote{Implementation in C++, except for the polynomial-time 2-approximation algorithm implemented in Python3. \revtwo{Unless otherwise stated,} all times are CPU seconds on an Intel Core I7-8700 3.20 GHz processor with 16GB of RAM.} we carried out had several objectives:
\begin{itemize}
    \item to evaluate the computational performance of the dynamic programming approach of Section~\ref{sec:CompAndExact},
    \item to evaluate the impact of parameter $k$ on the set of consensus rankings,
    \item to assess the efficiency of the preprocessing technique of Section~\ref{sec:Graph},
    \item to study the practical approximation ratio of the polynomial-time 2-approximation algorithm proposed in Section~\ref{sec:approx}.
\end{itemize} 

\paragraph{Generation of preference profiles}
The preference profiles are generated according to the Mallows model~\citep{mallows10.2307/2333244}, using the Python package PrefLib-Tools~\citep{MaWa13a} in most tests (the PerMallows R package~\citep{irurozki2016permallows} was used for the tests of the 2-approximation algorithm). 
This model takes two parameters as input: a reference ranking $\sigma$ (the mode of the distribution) and a dispersion parameter $\phi \in (0,1)$. Given these inputs, the probability of generating a ranking \(r\) is proportional to \(\phi^{\KendallTau(r,\sigma)}\). The more \(\phi\) tends towards 0 (resp. 1), the more the preference rankings become correlated and resemble \(\sigma\) (resp. become equally probable, i.e., we are close to the \emph{impartial culture assumption}). 
This model enables us to measure in a simple way how the level of correlation in the input rankings impacts our results. 
In all tests, the number \(n\) of voters is set to 50 and
the ranking \(\sigma\) is set arbitrarily as the \(k\)-wise Kemeny rule is neutral. 
For each triple \((m,k,\phi)\) considered, the results are averaged over 50 preference profiles.

\paragraph{Practicability of the dynamic programming approach} 
We first evaluate our dynamic programming approach on instances with different values for \(m\) and \(k\). 
Note that the computational performance measured here is not impacted by the level of correlation in the input rankings as it does not change the number of states in dynamic programming nor the computation time to determine the optimal value in each state. 
Hence, we only consider instances generated under the impartial culture assumption, i.e., with \(\phi\approx 1\). 

Table \ref{tab:NT1} (Rows 3-5) displays the average, max and min running times obtained for some representative \((m,k)\) values. As expected, the running times increase exponentially with \(m\). Conversely, parameter \(k\) seems to have a moderate impact on the running times. The dynamic programming approach enables us to solve \(k\)-KAP in a time of up to 3 sec. (resp. 76 sec.) for \(m\!\le\!14\) (resp. \(m\!\le\!18\)).

\paragraph{Influence of \(k\) on the set of consensus rankings}
Second, we study the impact of \(k\) on the set of optimal solutions to \(k\)-KAP. Indeed, one criticism for the Kemeny rule is that there exists instances for which the set of consensus rankings is compounded of many solutions which are quite different from one another. Thus, we investigate if increasing \(k\) helps in mitigating this issue.
For this purpose, we consider the same instances as before and compute the average number of consensus rankings denoted by \(|\mathcal{R}^*|_{\texttt{avg}}\). The results are displayed in the sixth row of Table~\ref{tab:NT1}. 
Interestingly, this measure decreases quickly with \(k\). For instance, when \(m\!=\!18\), \(|\mathcal{R}^*|_{\texttt{avg}}\) is divided by 5 when \(k\) increases from 2 to 9 and is below 2 when \(k\!=\!m\). The intuition is that \(\EKendallTau{k}\) becomes more fine-grained as \(k\) increases.

\begin{table*}[!h]
\caption{\label{tab:NT1}Average, max and min CPU times in seconds of the dynamic programming approach of Section \ref{sec:CompAndExact} for varying values of $m$ and $k$ (Rows 3 to 5). Average number of consensus rankings for increasing values of $m$ and $k$ (Row 6).} 
\begin{center}
\scalebox{0.84}{
\begin{tabular}{|l|lll|lll|lll|lll|}
\hline
\(m\)  &                           & 6                         &      &                           & 10                        &      &                           & 14                        &      &                           & 18                        &      \\ 
\hline 
\(k\)         & \multicolumn{1}{l|}{2}    & \multicolumn{1}{l|}{3}    & 6    & \multicolumn{1}{l|}{2}    & \multicolumn{1}{l|}{5}    & 10   & \multicolumn{1}{l|}{2}    & \multicolumn{1}{l|}{7}    & 14   & \multicolumn{1}{l|}{2}    & \multicolumn{1}{l|}{9}    & 18   \\ 
\hline
 Average time & \multicolumn{1}{l|}{$<$0.01} & \multicolumn{1}{l|}{$<$0.01} & $<$0.01 & \multicolumn{1}{l|}{0.07} & \multicolumn{1}{l|}{0.08} & 0.08 & \multicolumn{1}{l|}{2.52} & \multicolumn{1}{l|}{2.54} & 2.61 & \multicolumn{1}{l|}{70.93} & \multicolumn{1}{l|}{72.26} & 74.95 \\ \hline 
 Max time  & \multicolumn{1}{l|}{$<$0.01} & \multicolumn{1}{l|}{$<$0.01} & $<$0.01 & \multicolumn{1}{l|}{0.80}  & \multicolumn{1}{l|}{0.08} & 0.09 & \multicolumn{1}{l|}{2.64} & \multicolumn{1}{l|}{2.60} & 2.64 & \multicolumn{1}{l|}{71.57} & \multicolumn{1}{l|}{73.57} & 75.38 \\ \hline 
 Min time  & \multicolumn{1}{l|}{$<$0.01} & \multicolumn{1}{l|}{$<$0.01} & $<$0.01 & \multicolumn{1}{l|}{0.70}  & \multicolumn{1}{l|}{0.07} & 0.08 & \multicolumn{1}{l|}{2.49} & \multicolumn{1}{l|}{2.49} & 2.57 & \multicolumn{1}{l|}{70.27} & \multicolumn{1}{l|}{71.91} & 74.33 \\ \hline 
  \(|\mathcal{R}^*|_{\texttt{avg}}\) & \multicolumn{1}{l|}{3.00}   & \multicolumn{1}{l|}{1.20}  & \multicolumn{1}{l|}{1.05}  & \multicolumn{1}{l|}{3.84}   & \multicolumn{1}{l|}{1.24}   & \multicolumn{1}{l|}{1.10}  & \multicolumn{1}{l|}{5.36}   & \multicolumn{1}{l|}{2.36}   & \multicolumn{1}{l|}{1.16}  & \multicolumn{1}{l|}{19.70}   & \multicolumn{1}{l|}{4.12}   & \multicolumn{1}{l|}{1.47}  \\
 \hline 
\end{tabular}
}\end{center}
\end{table*}

\paragraph{Impact of  the \(3\)-wise majority graph} 
We now study the impact of the preprocessing method proposed in Section~\ref{sec:Graph} for \(k\!=\!3\). This preprocessing uses the \(k\)-wise majority digraph to divide \(k\)-KAP into several subproblems which can be solved separately by dynamic programming. 
Hopefully, when voters' preferences are correlated (i.e., for ``small'' \(\phi\) values), these subproblems become smaller and more numerous, making the preprocessing more efficient.
The results are shown in Table \ref{tab:NT2}, where the results obtained without preprocessing are also given in the last column. The obtained running times are highly dependent on \(\phi\). For instance, with \(m\!=\!18\), the average running time for solving \(3\)-KAP is above 1 minute if \(\phi\!=\!0.95\) while it is below 1 second if \(\phi\!\le\!0.85\). This gap is necessarily related to the preprocessing step, since \(\phi\) has no impact on the running time of the dynamic programming approach. To explain this significant speed-up, we display in Table~\ref{tab:NT3} the average size of the largest SCC of the \(3\)-wise majority digraph at the end of the preprocessing step. 
Unsurprisingly, this average size turns out to be correlated with \(\phi\): when \(\phi\!\le\! 0.5\), the size of the largest SCC is almost always 1. Hence, the preprocessing step is likely to yield directly a consensus ranking. In contrast, when \(\phi\!=\!0.95\), the average size of the largest SCC is close to \(m\), thus the impact of the preprocessing is low.

\begin{table}[hbt]
\centering
    \caption{\label{tab:NT2}Average, max and min CPU times (in seconds) for the \(3\)-wise Kemeny rule with preprocessing.}
\scalebox{0.84}{\begin{tabular}{|l|c|l|l|l|l|l||l|}
\hline
\multirow{2}*{$m$}                      & \multirow{2}*{$\phi$}       & \multirow{2}*{0.5}  & \multirow{2}*{0.8}  & \multirow{2}*{0.85} & \multirow{2}*{0.9}  & \multirow{2}*{0.95} & w/o \\
& & & & & & &preproc.\\
\hline
\multicolumn{1}{|c|}{} & Avg time & $<$0.01 & $<$0.01 & $<$0.01 & $<$0.01 & $<$0.01 & $<$0.01 \\ \cline{2-8} 
6                      & Max time  & $<$0.01 & $<$0.01 & $<$0.01 & $<$0.01 & $<$0.01 & $<$0.01\\ \cline{2-8} 
                       & Min time  & $<$0.01 & $<$0.01 & $<$0.01 & $<$0.01 & $<$0.01 & $<$0.01\\ \hline
                       & Avg time & 0.03 &  0.03 & 0.04 & 0.07 & 0.10 & 0.07   \\ \cline{2-8} 
10                     & Max time  & 0.03 &  0.03 & 0.10 & 0.15 &  0.17 &  0.08 \\ \cline{2-8} 
                       & Min time  & 0.03 &  0.03 & 0.03 & 0.03 &  0.03 & 0.07  \\ \hline
                       & Avg time & 0.09 & 0.09 & 0.11 &  0.94 &  2.21 & 2.52  \\ \cline{2-8} 
14                     & Max time  & 0.09 & 0.13 & 0.25 &  3.14  &   3.26 & 2.59  \\ \cline{2-8} 
                       & Min time  & 0.08 & 0.08 & 0.09 &  0.10  &  0.26 &  2.49  \\ \hline
                       & Avg time & 0.20 & 0.20 & 0.55 &  14.87  &  61.72 &  71.17  \\ \cline{2-8} 
18                     & Max time  & 0.31 & 0.21 & 8.46 &  79.87 & 80.11 & 71.61 \\ \cline{2-8} 
                       & Min time  & 0.19 & 0.19 & 0.19 &   0.22   &  6.02 & 71.02  \\ \hline
\end{tabular}}
\end{table}

\begin{table}[hbt]
\centering
\caption{\label{tab:NT3}Average size of the largest SCC after preprocessing.}
\scalebox{0.84}{\begin{tabular}{|c|c|c|c|c|c|}
\hline
$m$\textbackslash{}$\phi$ & 0.47            & 0.81  & 0.85 & 0.88  & 0.95  \\ \hline
6                  & \textless{}1.10 & 1.84 & 1.88 & 2.72 & 3.28  \\ \hline
10                 & \textless{}1.10 & 1.64 & 3.28 & 5.32 & 8.20   \\ \hline
14                 & \textless{}1.10 & 2.68 & 3.84 & 9.12 & 12.91  \\ \hline
18                 & \textless{}1.10 & 2.84 & 4.27 & 9.80 & 17.44 \\ \hline
\end{tabular}}
\end{table}

\revised{\paragraph{Polynomial-time 2-approximation algorithm} Lastly, we study the practical approximation ratio of the proposed polynomial-time 2-approximation algorithm. The results are averaged over 50 instances randomly generated according to the Mallows model, for 50 voters and 12 candidates. Preliminary tests not reported here tended to show that this approximation ratio does not depend on the number of candidates. Furthermore, the approximation ratio becomes better when the number of voters increases which can be explained by the fact that the law of large numbers make dominant the reference ranking in the Mallows model, regardless of the aggregation rule. We report in Figure~\ref{fig:kSpearman} the box plots obtained by varying the values of $k$ and $\phi$ (the closer to 1, the closer to the impartial culture assumption). \revtwo
{Note that the CPU time to compute a consensus ranking for the $k$-wise Spearman distance (for 50 voters and 12 candidates) is below 12 milliseconds whatever the values of $k$ and $\phi$, using the \texttt{linear\_sum\_assignment} function of the SciPy Python library on an Intel Core i5 2.3 GHz dual core processor with 8GB of RAM.} It is interesting to observe that the practical approximation ratio is much better than 2: the worst ratio over all generated instances is 1.04. For the same reason as for the number of voters, the more correlated the preferences in the profile, the better the approximation ratio. Furthermore, the greater the value of $k$, the better the approximation ratio, as is clear from the curves obtained.}

\begin{figure}[ht]
\centering
    \caption{\revised{Practical approximation ratio of the 2-approximation algorithm.}}
    \label{fig:kSpearman}
    \scalebox{0.7}{\includegraphics{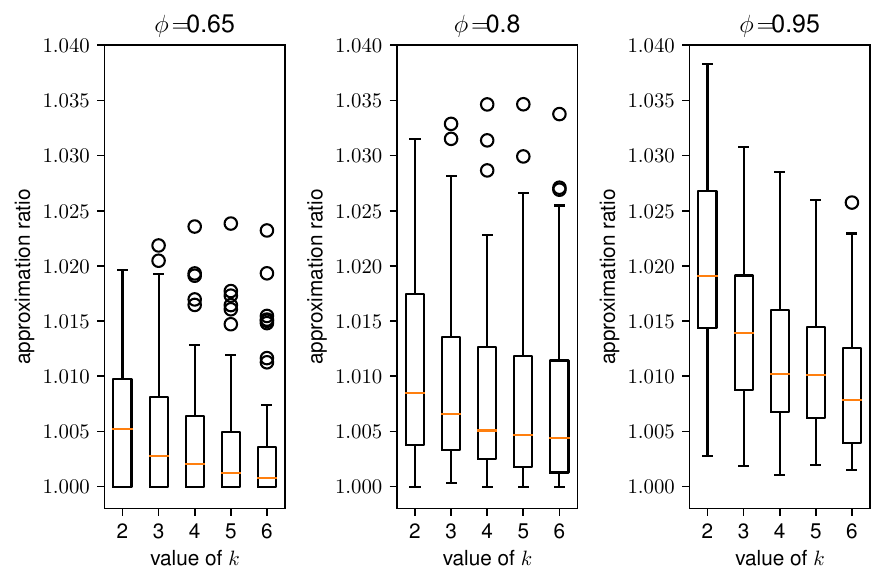}}
\end{figure}

\section{Conclusion}
In this paper, we advocate using the results of \emph{setwise} contests between candidates to design social welfare functions that are less myopic than those only based on pairwise comparisons. 
\HG{One natural such social welfare function is a \(k\)-wise generalization of the Kemeny rule, which returns a ranking minimizing the number of disagreements on top candidates of sets of cardinality lower than or equal to $k$. 
We have studied this \(k\)-wise Kemeny rule from both axiomatic and algorithmic viewpoints. In more detail, we} established that determining a consensus ranking is NP-hard for any \(k \ge 3\). Then, after proposing a dynamic programming procedure, we have investigated a \(k\)-wise variant of the majority graph, from which we developed a preprocessing step. Computing this graph is a polynomial time problem for \(k = 3\) but becomes NP-hard for \(k \ge 4\). The numerical tests show the practicability of the approach for up to 18 candidates. \HG{Lastly, we have designed a 2-approximation algorithm for the \(k\)-wise Kemeny aggregation problem. This approximation algorithm in fact returns a ranking minimizing a $k$-wise variant of the Spearman distance, and the worst case approximation ratio is then derived from an adaptation of the Diaconis-Graham inequality. The numerical tests suggest that, in practice, the \(k\)-wise Kemeny score of the returned ranking is often much better than a 2-approximation of the optimal score.}

\HG{Several research directions could be further investigated. One of them would be} to investigate the complexity of determining a consensus ranking for \(\EKendallTau{k}\) when \(k = m\), because our hardness result only holds for \emph{fixed} values of \(k\). \HG{Secondly, note that \(\EKendallTau{k}\) focuses on ``small'' sets as we count disagreements on sets of size lower than or equal to $k$. An opposite viewpoint would be to consider ``large'' sets by counting disagreements on sets of size greater than or equal to $m - k$. Note that for $k = 0$ this would lead to a rule similar to plurality. 
Another direction is to study the complexity of recognition problems related to the $k$-wise Kemeny rule~\cite{hudry2013complexity}, i.e., deciding if a given ranking is a consensus ranking for the $k$-wise Kemeny rule.} Alternative definitions of \(k\)-wise majority graphs that are easier to compute for \(k\!>\!3\) are also worth investigating. Finally, other social welfare functions based on the results of setwise contests are worth investigating in our opinion, both from the axiomatic and the computational points of view.


\section*{Acknowledgements}
\revtwo{We wish to thank the anonymous reviewers for their useful comments that helped us to improve the presentation of the paper.} This work has been partially supported by project ANR-14-CE24-0007-01 ``CoCoRICo-CoDec'' and the Italian MIUR PRIN 2017 project ALGADIMAR ``Algorithms, Games, and Digital Markets''.

\bibliographystyle{elsarticle-num-names}
\bibliography{biblio}

\end{document}